\definecolor{bur}{RGB}{128, 0, 32}
\definecolor{myblue}{rgb}{0.2, 0.2, 0.6}
\theoremstyle{plain} 
\newtheorem{theorem}{Theorem}
\newtheorem*{theorem*}{Theorem}
\newtheorem{remark}{Remark}
\DeclareMathOperator{\Tr}{Tr}
\title{GeoClip: Geometry-Aware Clipping  for \\ Differentially Private SGD}
\author{%
  Atefeh Gilani \\
  Arizona State University \\
  Tempe, AZ, USA \\
  \texttt{agilani2@asu.edu} \\
  \And
  Naima Tasnim \\
  Arizona State University \\
  Tempe, AZ, USA \\
  \texttt{ntasnim2@asu.edu} \\
  \And
  Lalitha Sankar \\
  Arizona State University \\
  Tempe, AZ, USA \\
  \texttt{lsankar@asu.edu} \\
  \And
  Oliver Kosut \\
  Arizona State University \\
  Tempe, AZ, USA \\
  \texttt{okosut@asu.edu} \\
}
\begin{document}

\maketitle

\begin{abstract}
Differentially private stochastic gradient descent (DP-SGD) is the most widely used method for training machine learning models with provable privacy guarantees. A key challenge in DP-SGD is setting the per-sample gradient clipping threshold, which significantly affects the trade-off between privacy and utility. While recent adaptive methods improve performance by adjusting this threshold during training, they operate in the standard coordinate system and fail to account for correlations across the coordinates of the gradient. We propose GeoClip, a geometry-aware framework that clips and perturbs gradients in a transformed basis aligned with the geometry of the gradient distribution. GeoClip adaptively estimates this transformation using only previously released noisy gradients, incurring no additional privacy cost. We provide convergence guarantees for GeoClip and derive a closed-form solution for the optimal transformation that minimizes the amount of noise added while keeping the probability of gradient clipping under control. Experiments on both tabular and image datasets demonstrate that GeoClip consistently outperforms existing adaptive clipping methods under the same privacy budget.
\end{abstract}

\section{Introduction}
As machine learning models are increasingly trained on sensitive user data, ensuring strong privacy guarantees during training is essential to reduce the risk of misuse, discrimination, or unintended data exposure. Differential privacy (DP)~\citep{Dwork_Calibration,Dwork-OurData} offers a principled framework for protecting individual data, and has become a cornerstone of privacy-preserving machine learning. In deep learning, the most widely used approach for DP is differentially private stochastic gradient descent (DP-SGD)~\citep{abadi2016deep}, which clips per-sample gradients and adds calibrated noise to their average.


Despite its widespread use, standard DP-SGD has a key limitation: it relies on a fixed clipping threshold to bound the sensitivity of individual gradients. Selecting this threshold poses a challenging privacy-utility tradeoff—setting it too low discards useful gradient information, while setting it too high increases sensitivity and necessitates injecting more noise, ultimately degrading model performance. This trade-off was observed empirically by~\citet{brendan2018learning} and later analyzed theoretically and shown to be a fundamental limitation of differentially private learning by~\citet{amin2019bounding}. Moreover, the optimal threshold can vary over the course of training, across tasks, and between datasets, limiting the effectiveness of a fixed setting.

To address this, recent work has proposed adaptive strategies that dynamically adjust the clipping threshold during training. One class of methods uses decay schedules to reduce the threshold over time. \citet{yu2018improve} and \citet{Du2021DynamicDP} propose linear and near-linear decay rules, respectively, where the schedule is predefined and does not depend on the dataset—hence, no privacy budget is required. \citet{traLin} introduce a nonlinear decay schedule, along with a transfer strategy that leverages public data to guide threshold selection. Recently, methods have been introduced which set the clipping level based on the data during the training process. These include AdaClip~\citep{adaclip}, which applies coordinate-wise clipping based on estimated gradient variances. Yet another is quantile-based clipping~\citep{quantile-based-clipping}, which sets the threshold using differentially private quantiles of per-sample gradient norms. {Although~\citep{quantile-based-clipping} was designed for federated learning, it can be adapted to} centralized DP-SGD. These adaptive strategies have been shown to improve model utility while preserving privacy guarantees.

Despite these advances, existing adaptive clipping methods remain agnostic to the geometry of the gradient distribution.  They operate in the standard basis—treating each coordinate independently. This overlooks dependencies between coordinates, especially when gradients exhibit strong correlations across dimensions. In such cases, independently clipping and perturbing each coordinate can introduce redundant noise without improving privacy, ultimately degrading model utility.  To address this, we propose \textit{GeoClip}, a method that transforms gradients into a decorrelated basis that better reflects their underlying geometry. By applying DP mechanisms in this transformed space, GeoClip allocates noise more effectively, achieving a better privacy-utility tradeoff.

A different approach to correlations in gradients from the literature considers 
introducing correlations—dependencies across iterations and between entries of the noise vector—into the noise, rather than injecting i.i.d.\ Gaussian noise. This has been shown to improve the utility of private training~\citep{denisov2022improved,pmlr-v139-kairouz21b}.~\citet{choquette2024correlated} strengthen this direction by analytically characterizing near-optimal spatio-temporal correlation structures that lead to provably tighter privacy-utility tradeoffs. 
However, these approaches have been developed independently of adaptive clipping, and the effect of combining both methods remains unexplored. In addition,~\citep{choquette2024correlated} uses pre-determined correlations in the noise, rather than being tailored to the data, as our approach is.

GeoClip is data-driven but does not require any additional privacy budget to compute the clipping transformation. Instead, it reuses the noisy gradients already released during training to estimate the mean and the covariance of the gradient distribution. By reusing these privatized gradients, GeoClip adapts its basis over time without accessing raw data or incurring additional privacy cost. 

We list our main contributions below:  
\begin{enumerate}[wide, labelwidth=0pt, labelindent=0pt,itemsep=4pt]
\item We propose GeoClip, a novel framework that applies differential privacy in a transformed basis rather than the standard coordinate system. To guide the choice of transformation, we derive a convergence theorem (Theorem~\ref{thm:convergence}) showing how the basis impacts convergence under DP-SGD, providing theoretical guidance for selecting transformations that improve utility.

\item Building on this insight, we formulate a convex optimization problem to find the transformation, and derive a closed-form solution (Theorem~\ref{thm:optimal-solution-gen}).

\item We introduce two algorithms to estimate the transformation using only previously released noisy gradients. The first is based on a moving average to estimate the gradient covariance matrix.

\item For large-scale models, the full gradient covariance matrix is prohibitively large to store. Thus, our second algorithm uses a streaming low-rank approximation of the covariance matrix. This second algorithm is thus suitable for deep models with large parameter counts.

\item We validate GeoClip through experiments on synthetic, tabular, and image datasets, showing that it consistently outperforms existing adaptive clipping methods under the same privacy budget.
\end{enumerate}
\paragraph{Notation.} 
We denote the $d \times d$ identity matrix by $I_d$. The notation $N \sim \mathcal{N}(0, \sigma^2 I_d)$ denotes a $d$-dimensional Gaussian with zero mean and covariance $\sigma^2 I_d$. We use $\|x\|_2$ for the Euclidean norm of a vector $x$, $A^{-1}$ for the inverse of matrix $A$, and $A^\top$ for its transpose. The trace of a matrix $A$ is denoted by $\Tr(A)$, and $\text{Cov}(x \mid y)$ refers to the conditional covariance of $x$ given $y$.
\section{General Framework}\label{sec:framework}
Let $\mathcal{D} = \{x_k\}_{k=1}^N$ be a dataset of $N$ examples, and let $f: \mathbb{R}^d \to \mathbb{R}$ denote the empirical loss function defined as the average of per-sample losses:
\begin{align}
    f(\theta) = \frac{1}{N} \sum_{k=1}^N f_k(\theta),
\end{align}
where $\theta \in \mathbb{R}^d$ is the model parameter vector and each $f_k$ corresponds to the loss on the $k$-th data point $x_k$. In DP-SGD, the algorithm updates $\theta$ using a noisy clipped stochastic gradient to ensure privacy. Let $g_t \in \mathbb{R}^d$ denote the stochastic gradient at iteration $t$. The update rule is:
\begin{align}
    \theta_{t+1} = \theta_t - \eta \tilde{g}_t,
\end{align}
where $\tilde{g}_t$ is the privatized version of $g_t$, obtained by clipping and adding noise.

GeoClip builds on the DP-SGD framework but changes how gradients are processed before clipping and adding noise. It begins by shifting and projecting the gradient into a new coordinate system:
\begin{align}
    \omega_t = M_t (g_t - a_t),
\end{align}
where $a_t \in \mathbb{R}^d$ is a reference point and $M_t \in \mathbb{R}^{d \times d}$ is a full-rank transformation matrix that defines the new basis. To enforce differential privacy, we clip the transformed gradient $\omega_t$ to unit norm and add Gaussian noise:
\begin{align}
    \tilde{\omega}_t = \frac{\omega_t}{\max(1, \|\omega_t\|_2)} + N_t, \quad N_t \sim \mathcal{N}(0, \sigma^2 I_d),
\end{align}
where $\sigma$ is set based on the desired privacy guarantee. We then map the noisy, clipped gradient back to the original space:
\begin{align}
    \tilde{g}_t = M_t^{-1} \tilde{\omega}_t + a_t.\label{g_tilde}
\end{align}

\begin{remark}
GeoClip generalizes AdaClip of \citet{adaclip}, which itself extends standard DP-SGD. AdaClip essentially assumes that $M_t$ is diagonal for per-coordinate scaling, whereas GeoClip allows $M_t$ to be any full-rank matrix. This added flexibility allows GeoClip to account for correlations between gradient components and inject noise along more meaningful directions.
\end{remark}
A key feature of our framework is that the privacy guarantees remain unaffected by the choice of $M_t$ and $a_t$, as long as no privacy budget is used to compute them. This allows $M_t$ and $a_t$ to be chosen entirely based on utility, without compromising privacy. The main challenge, then, is how to select these parameters effectively. To address this, we first define a performance metric for GeoClip. Inspired by \citet{adaclip}, we derive the following convergence bound for our framework and use it to guide the design of $M_t$ and $a_t$, ultimately improving both convergence and training efficiency.
\begin{theorem}[Convergence of GeoClip]\label{thm:convergence}
 Assume $f$ has an $L$-Lipschitz continuous gradient. Further, assume the stochastic gradients are bounded, i.e., $\|\nabla f_k(\theta)\| \leq G$, and have bounded variance, i.e., $\mathbb{E}_k \|\nabla f_k(\theta) - \nabla f(\theta)\|^2 \leq \sigma_g^2$. Let $\theta^* = \arg\min_{\theta \in \mathbb{R}^d} f(\theta)$ denote the optimal solution, and suppose the learning rate satisfies $\eta < \frac{2}{3L}$. Then, for the iterates $\{\theta_t\}_{t=0}^{T-1}$ produced by GeoClip with batch size 1 using the update rule
$
    \theta_{t+1} = \theta_t - \eta \tilde{g}_t,
$
where $\tilde{g}_t$ is defined in~\eqref{g_tilde}, the average squared gradient norm satisfies:

\begin{align}
\nonumber\frac{1}{T} \sum_{t=0}^{T-1} \mathbb{E} \|\nabla f(\theta_t)\|^2 
&\leq 
\underbrace{\frac{f(\theta_0) - f(\theta^*)}{T \left(\eta - \frac{3L\eta^2}{2} \right)}}_{\text{Optimization gap}} 
+ \underbrace{\frac{3L\eta}{2 - 3L\eta} \sigma_g^2}_{\text{Gradient variance term}}+ \underbrace{\frac{L\eta \sigma^2}{T(2 - 3L\eta)} \sum_{t=0}^{T-1} \mathbb{E}\Tr\left[\left(M_t^\top M_t\right)^{-1}\right]}_{\text{Noise-injection term}} \\
&\quad+ \underbrace{\frac{2}{T(2 - 3L\eta)} \sum_{t=0}^{T-1} \mathbb{E}\left[ 
\beta(a_t) \left(\Tr\left(M_t^\top M_t \Sigma_t \right) + \|M_t(\mathbb{E}[g_t \mid \theta^t] - a_t)\|^2 \right)
\right]}_{\text{Clipping error term}},\label{eq:thm-conv}
\end{align}
where $ \theta^t = (\theta_0, \dots, \theta_t) $ represents the history of parameter values up to iteration $t$, $\Sigma_t=\textup{Cov}(g_t | \theta^t)$, and
\begin{align}
 \beta(a_t) = (G + \|a_t\|)\left(G + \frac{3L\eta}{2}(G + \|a_t\|)\right).
\end{align}
\end{theorem}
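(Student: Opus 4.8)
The plan is to run the standard descent-lemma argument for nonconvex SGD, with all the novelty concentrated in controlling the bias that clipping introduces in the transformed basis. The first step is to rewrite the privatized gradient \eqref{g_tilde} as $\tilde g_t = g_t - b_t + M_t^{-1} N_t$, where $b_t := M_t^{-1}\!\left(\omega_t - \omega_t/\max(1,\|\omega_t\|_2)\right)$ is the clipping bias. The key structural observation is that $b_t = 0$ whenever $\|\omega_t\|_2 \le 1$, while whenever $\|\omega_t\|_2 > 1$ one has $b_t = \left(1 - 1/\|\omega_t\|_2\right)(g_t - a_t)$, so that in all cases
\begin{align*}
\|b_t\| \;\le\; (G + \|a_t\|)\,\mathbbm{1}[\|\omega_t\|_2 > 1].
\end{align*}
Since $M_t$ and $a_t$ are computed only from previously released noisy gradients, they are measurable with respect to the history $\theta^t$; and since the batch size is $1$, $\mathbb{E}[g_t \mid \theta^t] = \nabla f(\theta_t)$ with $\mathbb{E}[\|g_t - \nabla f(\theta_t)\|^2 \mid \theta^t] = \Tr(\Sigma_t) \le \sigma_g^2$, while $N_t$ has mean zero and is independent of $(g_t,\theta^t)$.

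Next I would apply $L$-smoothness to the update $\theta_{t+1} = \theta_t - \eta \tilde g_t$, giving $f(\theta_{t+1}) \le f(\theta_t) - \eta\langle \nabla f(\theta_t), \tilde g_t\rangle + \tfrac{L\eta^2}{2}\|\tilde g_t\|^2$, and take the conditional expectation given $\theta^t$. The linear term yields $-\eta\|\nabla f(\theta_t)\|^2 + \eta\langle\nabla f(\theta_t), \mathbb{E}[b_t\mid\theta^t]\rangle$ because the noise is mean zero; Cauchy–Schwarz with $\|\nabla f(\theta_t)\|\le G$ and the bias bound above controls the second piece by $\eta G(G+\|a_t\|)\,\mathbb{P}(\|\omega_t\|_2 > 1\mid\theta^t)$. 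For the quadratic term, the noise cross-term vanishes by independence, leaving $\mathbb{E}[\|g_t - b_t\|^2\mid\theta^t] + \sigma^2\Tr\!\big((M_t^\top M_t)^{-1}\big)$, where the second summand is exactly $\mathbb{E}\|M_t^{-1}N_t\|^2$. Using a bound of the form $\|g_t - b_t\|^2 \le 3\|g_t\|^2 + 3\|b_t\|^2$ and taking expectations converts the first summand into $3\|\nabla f(\theta_t)\|^2 + 3\sigma_g^2 + 3(G+\|a_t\|)^2\,\mathbb{P}(\|\omega_t\|_2>1\mid\theta^t)$. Collecting the two clipping-probability contributions assembles the constant $(G+\|a_t\|)\big(G + \tfrac{3L\eta}{2}(G+\|a_t\|)\big) = \beta(a_t)$ (after pulling out a common factor of $\eta$), the crude bound $\mathbb{P}(\|\omega_t\|_2>1\mid\theta^t) \le \mathbb{E}[\|\omega_t\|_2^2\mid\theta^t]$ replaces the probability by a second moment, and the bias–variance identity $\mathbb{E}[\|\omega_t\|_2^2\mid\theta^t] = \Tr(M_t^\top M_t\Sigma_t) + \|M_t(\mathbb{E}[g_t\mid\theta^t] - a_t)\|^2$ rewrites it in the stated form.

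At this point one has a per-iteration recursion $\mathbb{E}[f(\theta_{t+1})\mid\theta^t] \le f(\theta_t) - \big(\eta - \tfrac{3L\eta^2}{2}\big)\|\nabla f(\theta_t)\|^2 + \tfrac{3L\eta^2}{2}\sigma_g^2 + \tfrac{L\eta^2\sigma^2}{2}\Tr\!\big((M_t^\top M_t)^{-1}\big) + \eta\,\beta(a_t)\,\mathbb{E}[\|\omega_t\|_2^2\mid\theta^t]$. Taking total expectations, summing over $t=0,\dots,T-1$, telescoping the $f$-terms and using $f(\theta_T)\ge f(\theta^*)$, and finally dividing by $T\big(\eta - \tfrac{3L\eta^2}{2}\big) = \tfrac12 T\eta(2-3L\eta)$, which is positive precisely because $\eta < \tfrac{2}{3L}$, yields \eqref{eq:thm-conv} after simplifying each coefficient.

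The main obstacle is the clipping-error analysis: one must show that clipping in the transformed basis degrades convergence only through the second moment $\mathbb{E}[\|\omega_t\|_2^2\mid\theta^t]$, with a coefficient depending on $a_t$ but never on $M_t^{-1}$. The explicit form of $b_t$ on the clipping event is what makes this possible — it exhibits the bias as a contraction of $g_t - a_t$, whose norm is uniformly bounded by $G + \|a_t\|$, so no factor of $\|M_t^{-1}\|$ ever enters the bias terms (it appears only in the genuine, separate noise-injection term $\Tr((M_t^\top M_t)^{-1})$). Organizing the first- and second-order clipping contributions so their constants fuse into the single quantity $\beta(a_t)$ is the most delicate bookkeeping; the rest is the routine smoothness-plus-telescoping template.
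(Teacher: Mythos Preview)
Your proposal is correct and follows essentially the same route as the paper's proof: your clipping bias $b_t$ is exactly the paper's $-\Delta_t$, and the chain descent-lemma $\to$ Cauchy--Schwarz on the linear term (using $\|\nabla f(\theta_t)\|\le G$) $\to$ three-way split of the quadratic term $\to$ Markov bound $\Pr(\|\omega_t\|>1)\le \mathbb{E}\|\omega_t\|^2$ $\to$ bias--variance identity $\to$ telescoping is identical. The only cosmetic difference is that the paper writes the quadratic split directly as $\|(g_t-\nabla f(\theta_t)) + \nabla f(\theta_t) + \Delta_t\|^2 \le 3\big(\|g_t-\nabla f(\theta_t)\|^2 + \|\nabla f(\theta_t)\|^2 + \|\Delta_t\|^2\big)$, whereas your phrasing ``$\|g_t-b_t\|^2 \le 3\|g_t\|^2 + 3\|b_t\|^2$'' is a slightly looser two-term inequality followed by $\mathbb{E}\|g_t\|^2 = \|\nabla f(\theta_t)\|^2 + \Tr(\Sigma_t)$; both yield the same final constants.
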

The above result generalizes Theorem 2 in \citet{adaclip}; proof details are in Appendix~\ref{app:convergence-proof}.

Theorem~\ref{thm:convergence} provides insight into how we should choose the transformation parameters $a_t$ and $M_t$. In particular, we want to choose the transformation parameters $a_t$ and $M_t$, for all $t$, to minimize the right-hand side of \eqref{eq:thm-conv}. The reference point $a_t$ directly affects the clipping error by setting the center around which gradients are clipped, thereby influencing how much of each gradient is truncated. In contrast, the noise injection term remains independent of $a_t$, as noise is added regardless of the gradient's position relative to $a_t$. From Theorem~\ref{thm:convergence}, the clipping error at iteration $t$ takes the form
\begin{align}
\beta(a_t) \left( \Tr\left(M_t^\top M_t \, \Sigma_t\right) + \|M_t(\mathbb{E}[g_t \mid \theta^t] - a_t)\|^2 \right),
\end{align}
where $\beta(a_t)$ is a scale factor that grows with $\|a_t\|$. A natural choice is to set $a_t = \mathbb{E}[g_t \mid \theta^t]$
which eliminates the bias term $\|M_t(\mathbb{E}[g_t \mid \theta^t] - a_t)\|^2$. 
Given this choice, since gradients are norm-bounded by $G$, we can apply Jensen’s inequality to also bound the norm of their mean, to obtain the bound 
\begin{align}
\beta\left(a_t\right) \leq 2G^2(1 + 3L\eta).
\end{align}
We now show that the remaining contribution to the clipping error term $\Tr\left(M_t^\top M_t \Sigma_t\right)$ in fact serves as an upper bound on the probability that the gradient is clipped (i.e., $\|\omega\| > 1$). We do so using Markov’s inequality as detailed below:
\begin{align}
\Tr\left(M_t^\top M_t \, \Sigma_t \right)
&= \mathbb{E}[\|M_t (g_t - \mathbb{E}[g_t \mid \theta^t])\|^2 \mid \theta^t] \\
&= \mathbb{E}[\|\omega\|^2 \mid \theta^t] \\
&\geq \Pr(\|\omega\| > 1 \mid \theta^t).
\end{align}
We can now interpret the clipping error term as the likelihood that clipping occurs---ideally, the lower the better as the gradient information will be better preserved. However, in setting a clipping level, we must also be aware of the amount of noise: if $M_t$ is scaled down, there is effectively more noise, as captured by the noise-injection term in \eqref{eq:thm-conv}. 
We handle this tradeoff via the following optimization problem for the transformation matrix $M_t$:
\begin{align}
\nonumber\underset{{M_t}}{\text{minimize}} \quad & \Tr\left(M_t^\top M_t\right)^{-1} \\
\textrm{subject to} \quad & \Tr\left(M_t^\top M_t \, \Sigma_t\right) \leq \gamma.\label{eq:optimization}
\end{align}
\begin{theorem}\label{thm:optimal-solution-gen}
Let $\Sigma_t=\text{Cov}(g_t \mid \theta^t)$ be a positive definite matrix. The optimal transformation matrix $M_t^* \in \mathbb{R}^{d \times d}$ at iteration $t$ for the optimization problem in~\eqref{eq:optimization}, along with its corresponding objective value, are given by:
\begin{align}
    M_t^* &= \left(\frac{\gamma}{\sum_{i=1}^d \sqrt{\lambda_i}} \right)^{1/2} \Lambda_t^{-1/4} U_t^\top, \quad \Tr\left(M_t^{*\top} M_t^*\right)^{-1} = \frac{\left( \sum_{i=1}^d \sqrt{\lambda_i} \right)^2}{\gamma},\label{eq:opt_sol}
\end{align}
where $\Sigma_t = U_t \Lambda_t U_t^\top$ is the eigendecomposition of the covariance matrix, with $\Lambda_t = \mathrm{diag}(\lambda_1, \ldots, \lambda_d)$ containing its eigenvalues.
\end{theorem}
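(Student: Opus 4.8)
The plan is to reparametrize \eqref{eq:optimization} in terms of the single positive definite matrix $S := M_t^\top M_t$. Every full-rank $M_t \in \mathbb{R}^{d\times d}$ yields some $S \succ 0$, and conversely every $S \succ 0$ is realized by a full-rank matrix (e.g.\ $M_t = S^{1/2}$); moreover both the objective $\Tr(S^{-1})$ and the constraint $\Tr(S\,\Sigma_t)\le\gamma$ depend on $M_t$ only through $S$. Hence the problem is equivalent to minimizing $\Tr(S^{-1})$ over $S\succ 0$ subject to $\Tr(S\,\Sigma_t)\le\gamma$. (As a sanity check, this is a convex program — $S\mapsto\Tr(S^{-1})$ is convex on the PD cone and the constraint is linear, with Slater's condition trivially satisfied — so a point satisfying the KKT stationarity condition $S^{-2}=\mu\,\Sigma_t$ is globally optimal; but I will instead give a direct argument that also pins down optimality.)

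Next I would pass to the eigenbasis of $\Sigma_t$. Writing $\Sigma_t = U_t \Lambda_t U_t^\top$ and introducing the congruence $\tilde S := \Lambda_t^{1/2} U_t^\top S\, U_t \Lambda_t^{1/2}\succ 0$, a short computation with the cyclic property of the trace gives $\Tr(S\,\Sigma_t) = \Tr(\tilde S)$ and $\Tr(S^{-1}) = \Tr(\Lambda_t\,\tilde S^{-1})$. So the problem becomes: minimize $\Tr(\Lambda_t\,\tilde S^{-1})$ subject to $\Tr(\tilde S)\le\gamma$, $\tilde S\succ 0$.

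The one step requiring a small idea — and the part I expect to be the crux — is showing the optimal $\tilde S$ may be taken diagonal, i.e.\ that the optimal $S$ commutes with $\Sigma_t$. For this I would invoke the elementary fact that for any PD matrix $\tilde S$ one has $(\tilde S^{-1})_{ii}\ge 1/\tilde S_{ii}$, which follows from $1 = (e_i^\top e_i)^2 \le (e_i^\top \tilde S^{-1} e_i)(e_i^\top \tilde S\, e_i)$. Replacing $\tilde S$ by the diagonal matrix $D := \mathrm{diag}(\tilde S_{11},\dots,\tilde S_{dd})$ then leaves $\Tr(\tilde S)=\Tr(D)$ unchanged while $\Tr(\Lambda_t\,\tilde S^{-1}) = \sum_i \lambda_i (\tilde S^{-1})_{ii} \ge \sum_i \lambda_i/\tilde S_{ii} = \Tr(\Lambda_t D^{-1})$, so it suffices to optimize over $\tilde S = \mathrm{diag}(s_1,\dots,s_d)$ with $s_i>0$: minimize $\sum_i \lambda_i/s_i$ subject to $\sum_i s_i\le\gamma$.

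Finally, Cauchy--Schwarz gives $\big(\sum_i\sqrt{\lambda_i}\big)^2 \le \big(\sum_i \lambda_i/s_i\big)\big(\sum_i s_i\big)$, so the objective is at least $\big(\sum_i\sqrt{\lambda_i}\big)^2/\gamma$, with equality exactly when $s_i \propto \sqrt{\lambda_i}$ and the constraint is tight, i.e.\ $s_i = \gamma\sqrt{\lambda_i}/\sum_j\sqrt{\lambda_j}$, equivalently $\tilde S^\ast = \tfrac{\gamma}{\sum_j\sqrt{\lambda_j}}\,\Lambda_t^{1/2}$. Unwinding the congruence yields $S^\ast = M_t^{\ast\top}M_t^\ast = \tfrac{\gamma}{\sum_j\sqrt{\lambda_j}}\,U_t\Lambda_t^{-1/2}U_t^\top$, and one checks directly that $M_t^\ast = \big(\gamma/\sum_j\sqrt{\lambda_j}\big)^{1/2}\Lambda_t^{-1/4}U_t^\top$ satisfies $M_t^{\ast\top}M_t^\ast = S^\ast$ (any orthogonal rotation $QM_t^\ast$ works equally well, since only $M_t^\top M_t$ enters the problem), while $\Tr\big((M_t^{\ast\top}M_t^\ast)^{-1}\big) = \tfrac{\sum_j\sqrt{\lambda_j}}{\gamma}\Tr\big(U_t\Lambda_t^{1/2}U_t^\top\big) = \big(\sum_i\sqrt{\lambda_i}\big)^2/\gamma$, matching \eqref{eq:opt_sol}.
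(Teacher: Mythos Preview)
Your proposal is correct but takes a genuinely different route from the paper. After the same reparametrization $A_t=M_t^\top M_t$, the paper forms the Lagrangian, differentiates in $A_t$ to obtain the stationarity condition $A_t^{-2}=\mu\,\Sigma_t$, solves $A_t=\mu^{-1/2}\Sigma_t^{-1/2}$, and fixes $\mu$ by activating the constraint; convexity is invoked to certify global optimality. You instead avoid matrix calculus entirely: you congruence-transform into the eigenbasis of $\Sigma_t$, use the elementary inequality $(\tilde S^{-1})_{ii}\ge 1/\tilde S_{ii}$ to reduce to diagonal $\tilde S$ (equivalently, to show the optimal $S$ commutes with $\Sigma_t$), and then finish with a scalar Cauchy--Schwarz that simultaneously gives the tight lower bound and the equality case. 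The paper's approach is shorter and more mechanical; yours is more elementary and fully self-contained, directly establishing global optimality via an inequality chain rather than stationarity, and it makes explicit \emph{why} the optimizer is diagonal in the $\Sigma_t$-basis --- a structural fact the Lagrangian computation yields only implicitly.
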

\begin{remark}
Applying the optimal transformation $M_t^*$ to the gradient $g_t$ at iteration $t$, the covariance of the transformed gradient $\tilde{\omega}_t = M_t^*(g_t - \mathbb{E}[g_t \mid \theta^t])$, conditioned on the history $\theta^t$, is 
\begin{align*}
\text{Cov}(\tilde{\omega}_t \mid \theta^t)
&= M_t^* \, \Sigma_t \, {M_t^*}^\top = \frac{\gamma \ \Lambda_t^{1/2}}{\sum_{i=1}^d \sqrt{\lambda_i}} = \gamma \, \mathrm{diag} \left( \frac{\sqrt{\lambda_1}}{\sum_{i=1}^d \sqrt{\lambda_i}}, \ldots, \frac{\sqrt{\lambda_d}}{\sum_{i=1}^d \sqrt{\lambda_i}} \right).
\end{align*}
Thus, the optimal transformation 
decorrelates and scales down the gradients while preserving the relative ordering of variance across directions, in contrast to traditional whitening, which eliminates all variance structure. Note that setting $a_t = \mathbb{E}[g_t \mid \theta^t]$ and choosing $M_t = \sqrt{ \frac{\gamma}{d} } \, \Lambda_t^{-1/2} U_t^\top$ would amount to a whitening of the gradients and ensures that the constraint in~\eqref{eq:optimization} is active, i.e., $
\Tr\left(M_t^\top M_t \, \Sigma_t\right) = \gamma.$
Under this choice, the objective becomes:
\begin{align}
\Tr\left(M_t^\top M_t\right)^{-1} = \frac{d}{\gamma} \Tr(\Lambda_t) = \frac{d}{\gamma} \sum_{i=1}^d \lambda_i.
\end{align}
Comparing our objective in~\eqref{eq:opt_sol} with the objective resulting from the whitening transformation, we observe that applying the Cauchy--Schwarz inequality yields:
\begin{align}
\frac{d}{\gamma} \sum_{i=1}^d \lambda_i \geq \frac{1}{\gamma} \left( \sum_{i=1}^d \sqrt{\lambda_i} \right)^2,
\end{align}
with equality if and only if $\lambda_1 = \ldots = \lambda_d$. This shows that our solution achieves a strictly smaller objective than whitening in all non-isotropic cases, where the gradient distribution exhibits unequal variance along the eigenbasis directions.
\end{remark}
The proof of \Cref{thm:optimal-solution-gen} is provided in Appendix~\ref{app:optimal-solution-gen}. 

\section{Algorithm Overview}
Algorithm~\ref{alg:geoclip} outlines our proposed GeoClip method. We explain its key steps below.

\textbf{Moving average for mean and covariance estimation.} 
While the theoretical results from Section~\ref{sec:framework} assume access to the true gradient distribution, in practice this distribution is unknown, and so it must be estimated using only privatized gradients. We estimate the mean and covariance using exponential moving averages computed from those privatized gradients. This enables us to estimate the geometry of the gradients without consuming additional privacy budget. Specifically, we maintain estimates $a_t$ of the mean, and $\Sigma_t$ of the covariance matrix, which  are updated according to:
\begin{align}
a_{t+1} &\gets \beta_1 a_t + (1 - \beta_1) \tilde{g}_t \\
\Sigma_{t+1} &\gets \beta_2 \Sigma_t + (1 - \beta_2)(\tilde{g}_t - a_t)(\tilde{g}_t - a_t)^\top
\end{align}
where $\beta_1$ and $\beta_2$ are constants close to 1 (e.g., $\beta_1 = 0.99$, $\beta_2 = 0.999$). The eigenvalues and eigenvectors used for the transformation are then computed from the estimated covariance.
\begin{algorithm}[t]
\caption{GeoClip}
\label{alg:geoclip}
\begin{spacing}{1.2} 
\begin{algorithmic}[1]
\Require Dataset $\mathcal{D}$, model $f_\theta$, loss $\mathcal{L}$, learning rate $\eta$, noise scale $\sigma$, steps $T$, hyperparameters $h_1,h_2,\beta_1,\beta_2$
\State Initialize $\theta$, mean vector $a_0 = 0$, covariance $\Sigma_0 = I_d$, transform $M_0 = M^{\text{inv}}_0 = I_d$
\For{$t = 0$ to $T$}
    \State Sample a data point $(x_t, y_t)$
    \State Compute gradient $g_t \gets \nabla_\theta \mathcal{L}(f_\theta(x_t), y_t)$
    \State Center and transform: $\omega_t \gets M_t (g_t - a_t)$
    \State Clip: $\bar{\omega}_t \gets \omega_t / \max(1, \|\omega_t\|_2)$
    \State Add noise: $\tilde{\omega}_t \gets \bar{\omega}_t + N$, where $N\sim\mathcal{N}(0, \sigma^2 I_d)$
    \State Map back: $\tilde{g}_t \gets M^{\text{inv}}_t \tilde{\omega}_t + a_t$
    \State Update model: $\theta_{t+1} \gets \theta_t - \eta \tilde{g}_t$
    \State Update mean: $a_{t+1} \gets \beta_1 a_t + (1 - \beta_1) \tilde{g}_t$
    \State Update covariance: $\Sigma_{t+1} \gets \beta_2 \Sigma_t + (1 - \beta_2)(\tilde{g}_t - a_t)(\tilde{g}_t - a_t)^\top$\label{alg:geo-update}
    \State Eigendecompose: $\Sigma_{t+1} = U_t \Lambda_t U^\top_t$
    \State Clamp eigenvalues: $\lambda_i \gets \texttt{Clamp}(\lambda_i, \text{min}=h_1, \text{max}=h_2)$ \label{alg:geo-clamp}
    \State Set $M_{t+1} \gets \left(\gamma/\sum_i \sqrt{\lambda_i}\right)^{1/2} \Lambda_t^{-1/4} U_t^\top$
    \State Set $M^{\text{inv}}_{t+1} \gets \left(\gamma/\sum_i \sqrt{\lambda_i}\right)^{-1/2} U_t \Lambda_t^{1/4}$
\EndFor
\State \Return Final parameters $\theta$
\end{algorithmic}
\end{spacing}
\end{algorithm}

\textbf{Clamping eigenvalues.}
The covariance matrix is positive semi-definite and may contain zero eigenvalues, which can cause numerical instability. To address this, we clamp eigenvalues from below at a small threshold $h_1$ (e.g., $10^{-15}$). Since we only observe privatized gradients, which may be noisy and unstable, we also clamp from above at $h_2$ to prevent extreme scaling.

\textbf{Covariance update with mini-batch.} 
Let $B$ denote a mini-batch of training examples sampled at each iteration, with $|B|$ indicating the batch size. When using mini-batch gradient descent with $|B| > 1$, we must estimate the mean and covariance from the privatized batch averages of the per-sample gradients. Let $g_i$ be the random variable representing the gradient of the 
$i$-th sample in the batch. Let $\bar{g}=\frac{1}{|B|}\sum_{i\in B} g_i$ be the batch average gradient. Assuming that the $g_i$ are i.i.d., with the same distribution as $g$, the covariance of the average gradient $\bar{g}$ satisfies
\begin{align}
\text{Cov}\left(\bar{g}\right) = \text{Cov}\left(\frac{1}{|B|} \sum_{i \in B} g_i\right) = \frac{1}{|B|} \text{Cov}(g).
\end{align}
The same principle applies when we observe only the privatized average gradient $\tilde{g}$. To account for this averaging effect, the covariance update is scaled by the batch size; i.e., 
line \ref{alg:geo-update} of GeoClip in Algorithm~\ref{alg:geoclip} becomes
\begin{align}
\Sigma_{t+1} \gets \beta_2 \Sigma_t + |B|(1 - \beta_2)(\tilde{g}_t - a_t)(\tilde{g}_t - a_t)^\top.
\end{align}

\begin{algorithm}[ht]
\caption{{\textsc{Streaming Rank-$k$ PCA}}}
\label{alg:rankkplus1}
\begin{spacing}{1.2}
\begin{algorithmic}[1]
\Require Eigenvectors $U_t \in \mathbb{R}^{d \times k}$, eigenvalues $\Lambda_t \in \mathbb{R}^{k\times k}$, gradient $\tilde{g}_t\in\mathbb{R}^d$, mean $a_{t+1}\in\mathbb{R}^d$, factor $\beta_3\in\mathbb{R}$, rank $k$
\State Center: $z\gets \tilde{g}_t - a_{t+1}$
\State Form augmented matrix: $U_{\text{aug}} \gets [U_t \ \ z]$
\State Compute: $Z \gets U_{\text{aug}} \; \mathrm{diag}(\sqrt{\beta_3 \lambda_1}, \dots, \sqrt{\beta_3 \lambda_k}, \sqrt{1 - \beta_3})$
\State Perform SVD: $Z = V S R^\top$
\State Set $U_{t+1} \gets$ first $k$ columns of $V$
\State Set $\Lambda_{t+1} \gets$ squares of the first $k$ singular values in $S$
\State Return: $U_{t+1}, \Lambda_{t+1}$
\end{algorithmic}
\end{spacing}
\end{algorithm}
\textbf{Low-Rank PCA.} When the dimensionality is high, computing and storing the full gradient covariance matrix becomes impractical. To address this, we propose a method to maintain a low-rank approximation using a simple and efficient procedure we refer to as \textit{Streaming Rank-$k$ PCA} (Algorithm~\ref{alg:rankkplus1}). Specifically, we maintain an approximate eigendecomposition of the covariance in the form $U_t \Lambda_t U_t^\top$, where $U_t \in \mathbb{R}^{d \times k}$ contains the top-$k$ eigenvectors and $\Lambda_t \in \mathbb{R}^{k \times k}$ the corresponding eigenvalues, where $k\ll d$. Upon receiving a new gradient $\tilde{g}_t$, we center it using the running mean $a_{t+1}$, yielding $z = \tilde{g}_t - a_{t+1}$, and perform a weighted update to the covariance:
\begin{align}
\Sigma_{t+1} &= \beta_3 U_t \Lambda_t U_t^\top + (1 - \beta_3) zz^\top = [U_t \;\; z] 
\begin{bmatrix}
\beta_3 \Lambda_t & 0 \\
0 & 1 - \beta_3
\end{bmatrix}
[U_t \;\; z]^\top
\end{align}
Rather than forming this full matrix, we compute its square root:
\begin{align}
Z = [U_t \;\; z] \cdot \mathrm{diag}(\sqrt{\beta_3 \lambda_1}, \dots, \sqrt{\beta_3 \lambda_k}, \sqrt{1 - \beta_3}) \in \mathbb{R}^{d \times (k+1)}.
\end{align}
We then perform an SVD on $Z$ and retain the top $k$ singular vectors and squared singular values as the updated eigenvectors and eigenvalues. We note that this computation takes $\mathcal{O}(dk^2 + k^3)$ time \cite{li2019}, highlighting that it is only linear in $d$. 
The rest of the procedure follows Algorithm~\ref{alg:geoclip} by replacing the moving average over the full covariance matrix in line \ref{alg:geo-update} of \Cref{alg:geoclip} with the low-rank approximation described above. Another necessary change to the algorithm is the following: since $U_t$ is no longer a square matrix, the transformation $M_t$ takes the gradient into the lower $k$-dimensional space to clip and add noise, such that the resulting $M_t^{\text{inv}}$ returns to the full $d$-dimensional space. The complete version of this variant is provided as Algorithm~\ref{alg:geoclip-pca-full} in Appendix~\ref{app:low-rank-pca}. 

\section{Experimental Results}
\begin{figure*}[t]
    \centering
    \begin{minipage}{0.47\textwidth}
        \centering
        \includegraphics[width=0.82\textwidth]{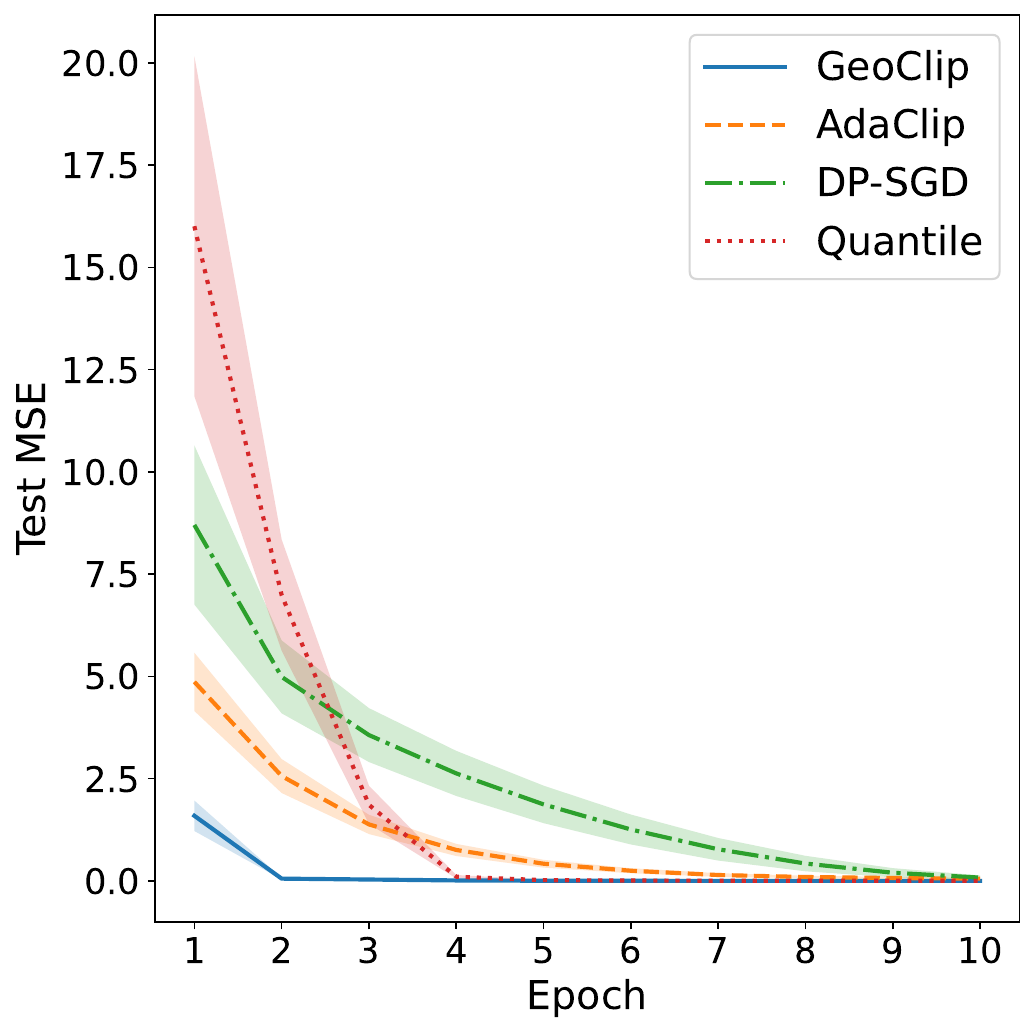}
    \end{minipage}
    \hspace{5pt}
    \begin{minipage}{0.47\textwidth}
        \centering   \includegraphics[width=0.82\textwidth]{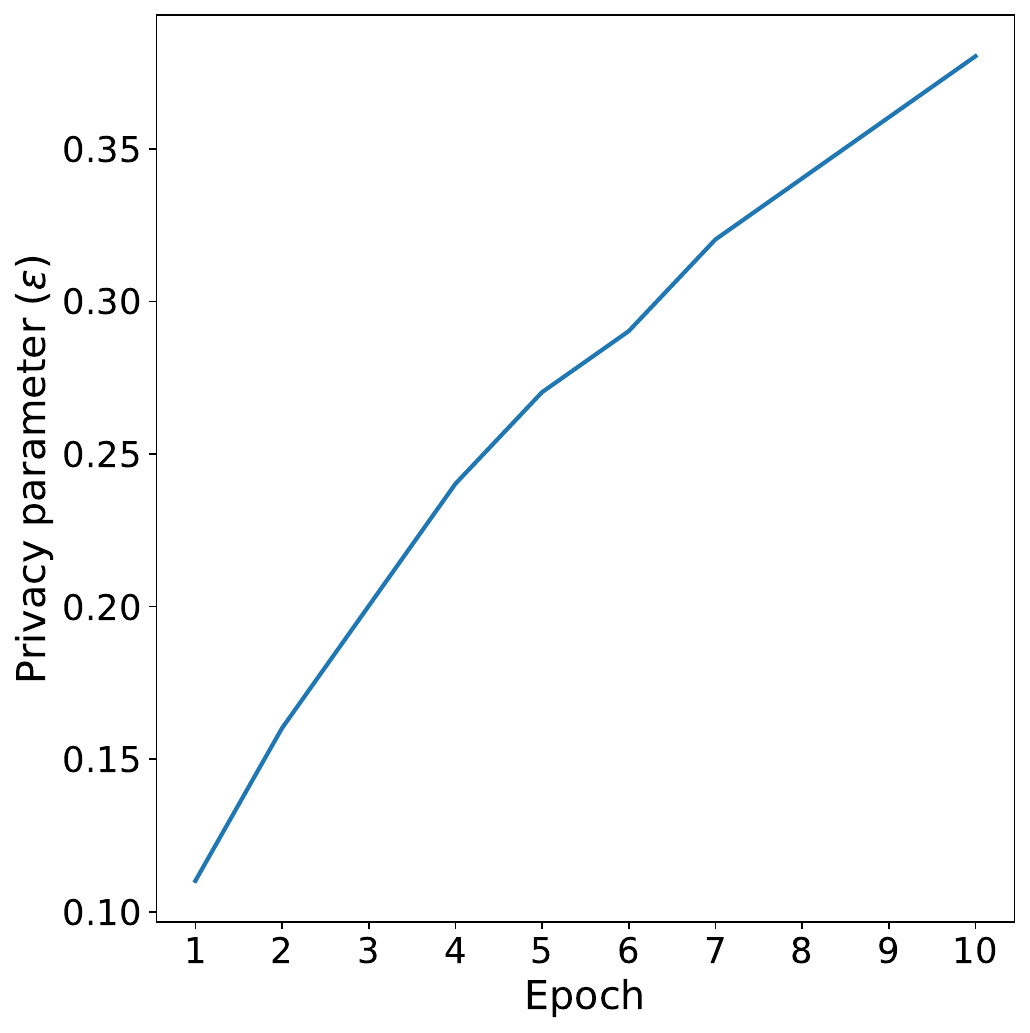}
    \end{minipage}
\caption{
GeoClip results for the synthetic Gaussian dataset with 10 features. The left plot shows the average test MSE for each method over 10 epochs, with shaded regions representing the standard deviation across 20 random seeds. We observe that GeoClip achieves the fastest convergence and lowest average test MSE. The right plot shows the overall privacy budget $\varepsilon$ expended for $\delta = 10^{-5}$. This plot applies to all four algorithms, as they are tuned to achieve the same privacy level for a given number of epochs.
}
    \label{fig:synth}
\end{figure*}
We present empirical results demonstrating the benefits of our proposed GeoClip framework compared to AdaClip, quantile-based clipping, and standard DP-SGD. For quantile-based clipping, we use the median of per-sample norms, which has been shown to perform well across various learning tasks~\cite{quantile-based-clipping}. All $(\varepsilon, \delta)$-DP guarantees are computed using the Connect-the-Dots accountant~\cite{connect-the-dots}. All experiments were conducted on Google Colab using CPU resources. The code implementation is available on GitHub~\citep{github-code}.

We start with a synthetic dataset to demonstrate how GeoClip accelerates convergence, reflecting its design motivation from Theorem~\ref{thm:convergence}, and then evaluate its performance on real-world datasets. All datasets are split into 80-10-10 train-validation-test sets for consistent evaluation. The results in Sections~\ref{subsec:syn}, \ref{subsec:tab}, and \ref{subsec:fine} are derived from Algorithm~\ref{alg:geoclip}. We present the results using the low-rank PCA variant in Section~\ref{subsec:low-rank}.



Our results indicate that our framework in \Cref{alg:geoclip} is robust to the choice of hyperparameters $\beta_1$, $\beta_2$, and $h_1$. Standard values commonly used in optimization, such as $\beta_1 = 0.99$ and $\beta_2 = 0.999$, work well in our setting. The parameter $h_1$ only needs to be a small positive constant (e.g., $10^{-15}$) to ensure numerical stability. Since the eigenvalues are clamped to the range $[h_1, h_2]$ (line \ref{alg:geo-clamp} in ~\Cref{alg:geoclip}), the trace term $\sum_i \sqrt{\lambda_i}$ is bounded between $d\sqrt{h_1}$ and $d\sqrt{h_2}$, where $d$ is the dimensionality. This allows us to set the parameter $\gamma$ to $1$, as its effect can be absorbed by tuning $h_2$. For $h_2$, we have observed that the values $1$ and $10$ perform consistently well across datasets. Throughout all experiments, we tune only $h_2$ to select between these two options. A similar setup is used in \Cref{alg:rankkplus1} with $\beta_2$ replaced by $\beta_3$ (set to $0.99$), which is also robust across experiments.
\subsection{Synthetic Dataset}\label{subsec:syn}
GeoClip is designed to improve convergence, particularly in the presence of feature correlation. To empirically demonstrate this, we evaluate it on a synthetic Gaussian dataset with 20{,}000 samples and 10 features—five of which are correlated, while the remaining five are independent. To obtain the 5 correlated features, we first generate an $20{,}000 \times 5$ matrix $Z$ and a $5 \times 5$ matrix $A$, each with entries drawn independently from the standard normal distribution. The correlated features are obtained by multiplying $Z$ by $A$. The remaining 5 features are drawn independently from a standard multivariate normal distribution. The full feature matrix $X$ is constructed by concatenating these two blocks. The target $y$ is generated using a linear function with Gaussian noise: $y = Xw + b + \epsilon$, where $w \sim \mathcal{N}(0, I_{10})$ is a weight vector, $b \sim \mathcal{N}(0, 1)$ is a scalar bias term, and $\epsilon \sim \mathcal{N}(0, 0.01^2)$ is i.i.d. noise. We train a linear regression model using various private training methods for 10 epochs with a batch size of 1024, tuning the learning rate for each method to ensure stable convergence. As shown in Figure~\ref{fig:synth} (left), GeoClip converges as early as epoch 2, while the next best method—quantile-based clipping—requires nearly twice as many epochs. Figure~\ref{fig:synth} (right) plots the privacy cost ($\varepsilon$) versus epoch, showing how faster convergence helps minimize overall privacy cost.

Although GeoClip introduces some computational and memory overhead, this cost is well justified by its empirical gains. GeoClip converges considerably faster than baseline methods, requiring fewer training iterations to reach comparable utility. While each iteration is slightly more expensive than in other baselines, this cost is partially offset by the reduced number of iterations. Combined with improved privacy, these benefits make the additional overhead a worthwhile trade-off.

As illustrated by the standard deviation bands in Figure~\ref{fig:synth}, GeoClip not only achieves faster convergence but also demonstrates more stable training with reduced variance—an important property when privacy constraints limit training to a single run. In such scenarios, lower variability across runs enhances the reliability of the final model without requiring additional privacy budget.


\subsection{Tabular Datasets}\label{subsec:tab}
In addition to the synthetic dataset, we run experiments to compare the performance of different clipping strategies on three real-world datasets: Diabetes \cite{efron2004least}, Breast Cancer \cite{breast_cancer}, and Android Malware \cite{malware_dataset}. We briefly describe each dataset and its corresponding learning task below.

The Diabetes dataset contains $442$ samples and $10$ standardized features. 
The continuous valued target variable indicates disease progression, making this a regression task. The Breast Cancer dataset contains $569$ samples and $30$ numerical features, 
which are standardized to zero mean and unit variance before training. The target is to predict a binary label indicating the presence or absence of cancer.  The Android Malware dataset contains $4465$ samples and $241$ integer attributes. The target is to classify whether a program is malware or not. 


For each of the three datasets, we perform a grid search over the relevant hyperparameters to identify the best model under a given $(\varepsilon, \delta)$ privacy budget. We train for 5 epochs using 20 random seeds and report the average performance along with the standard deviation in Tables~\ref{tab:diabetes},~\ref{tab:cancer}, and~\ref{tab:malware}. We observe that our proposed GeoClip framework consistently outperforms all baseline methods across both regression and classification tasks. GeoClip achieves better performance with noticeably smaller standard deviations, indicating greater stability across random seeds.
\begin{table}[ht]
\centering
\small
\caption{Diabetes dataset test MSE comparison for $\delta = 10^{-5}$, batch size $= 32$, and model dimension $d=11$.} 
\vspace{0.5em}
\begin{tabular}{lccc}
\toprule
\textbf{Framework} & $\varepsilon =$ 0.50 & $\varepsilon =$ 0.86 & $\varepsilon =$ 0.93 \\
\midrule
GeoClip (ours) & \textbf{0.073$\pm$ 0.015} & \textbf{0.044$\pm$0.003} & \textbf{0.039$\pm$0.009} \\
\addlinespace[2pt]
AdaClip & 0.077$\pm$0.027 & 0.062$\pm$0.028 & 0.055$\pm$0.014 \\
\addlinespace[2pt]
Quantile & 0.090$\pm$0.027 & 0.083$\pm$0.044 & 0.072$\pm$0.014 \\
\addlinespace[2pt]
DP-SGD & 0.108$\pm$0.040 & 0.095$\pm$0.047 & 0.072$\pm$0.040 \\
\bottomrule
\end{tabular}
\label{tab:diabetes}
\end{table}

\begin{table}[ht]
\centering
\small
\caption{Breast Cancer dataset test accuracy (\%) comparison for $\delta = 10^{-5}$, batch size $= 64$, and model dimension $d=62$.} 
\vspace{0.5em}
\begin{tabular}{lcccc}
\toprule
\textbf{Framework} & $\varepsilon =$ 0.67 & $\varepsilon =$ 0.8 & $\varepsilon =$ 0.87 \\
\midrule
GeoClip (ours) & \textbf{87.87$\pm$3.32} & \textbf{88.57$\pm$3.37} & \textbf{93.63$\pm$1.63} \\
\addlinespace[2pt]
AdaClip & 84.90$\pm$5.91 & 85.42$\pm$5.34 & 87.71$\pm$5.87 \\
\addlinespace[2pt]
Quantile & 81.41$\pm$12.53 & 81.63$\pm$10.71 & 92.28$\pm$2.68 \\
\addlinespace[2pt]
DP-SGD & 77.32$\pm$6.17 & 79.42$\pm$9.71 & 85.95$\pm$4.45 \\
\bottomrule
\end{tabular}
\label{tab:cancer}
\end{table}

\begin{table}[ht]
\centering
\small
\caption{Malware dataset test accuracy (\%) comparison for $\delta = 10^{-5}$, batch size $= 512$, and model dimension $d=484$.} 
\vspace{0.5em}
\begin{tabular}{lcccc}
\toprule
\textbf{Framework} & $\varepsilon =$ 0.26 & $\varepsilon =$ 0.49 & $\varepsilon =$ 0.67 \\
\midrule
GeoClip (ours) & \textbf{90.77$\pm$1.83} & \textbf{91.64$\pm$1.26} & \textbf{92.67$\pm$1.63} \\
\addlinespace[2pt]
AdaClip & 88.35$\pm$3.27 & 90.25$\pm$1.33 & 90.23$\pm$3.11 \\
\addlinespace[2pt]
Quantile & 77.84$\pm$1.29 & 78.84$\pm$1.27 & 81.86$\pm$1.31 \\
\addlinespace[2pt]
DP-SGD & 88.04$\pm$2.21 & 90.55$\pm$1.55 & 90.57$\pm$1.61 \\
\bottomrule
\end{tabular}
\label{tab:malware}
\end{table}

\subsection{Final Layer Fine-Tuning}\label{subsec:fine}
In many transfer learning scenarios, fine-tuning only the final layer is standard practice due to both its computational efficiency and minimal privacy cost. Last-layer fine-tuning is a well-suited application of GeoClip, as it involves a small number of trainable parameters, making covariance estimation more tractable. This setup also benefits from GeoClip’s faster convergence, which is particularly valuable in privacy-constrained settings where only limited training iterations are feasible. 

To demonstrate this, we design an experiment where a convolutional neural network (CNN) is first trained on MNIST~\cite{lecun1998mnist} using the Adam optimizer and then transferred to Fashion-MNIST~\cite{xiao2017fashion} by freezing all layers except the final fully connected layer. The CNN consists of two convolutional and pooling layers followed by a linear compression layer that reduces the feature size to 50, resulting in a total of only 510 trainable parameters. We fine-tune this layer using different methods under varying privacy budgets, and present the results in Table~\ref{table:fine-tune}.
\begin{table}[ht]
\centering
\small
\caption{Final layer DP fine-tuning on Fashion-MNIST for 4 epochs and $\delta=10^{-6}$ over 5 seeds.}
\vspace{0.5em}
\begin{tabular}{lcccccc}
\toprule
\textbf{Framework} & $\varepsilon =$ 0.6 & $\varepsilon =$ 1 \\
\midrule
GeoClip (Ours) & \textbf{73.09$\pm$0.72} & \textbf{73.09$\pm$0.63}  \\
\addlinespace[2pt]
AdaClip & 68.35$\pm$0.41 & 69.24$\pm$0.28 \\
\addlinespace[2pt]
Quantile-based  & 71.78$\pm$1.28 & 72.09$\pm$1.12  \\
\addlinespace[2pt]
DP-SGD & 69.40$\pm$0.82 & 69.83$\pm$0.83 \\
\bottomrule
\end{tabular}
\label{table:fine-tune}
\end{table}

\subsection{Low-Rank PCA Results}\label{subsec:low-rank}


To evaluate our low-rank PCA algorithm (Algorithm~\ref{alg:rankkplus1}), we construct a synthetic binary classification dataset with 20,000 samples and 400 Gaussian features, where 50 are correlated and 350 are uncorrelated. The feature matrix is constructed in the same manner as in Section~\ref{subsec:syn}. Labels are generated by applying a linear function to the features, adding Gaussian noise, and thresholding the sigmoid output. We train a logistic regression model with 802 parameters and compare our method against existing benchmarks using a rank-50 low-rank PCA approximation. As shown in the left panel of Figure~\ref{fig:syn-online-pca}, our approach converges faster than competing methods, even with this low-rank approximation.


We also evaluate our method on the USPS dataset~\cite{usps_dataset} using logistic regression with 2,570 trainable parameters (256 input features $\times$ 10 classes + 10 biases). The USPS dataset contains 9,298 grayscale handwritten digit images (0--9), each of size $16 \times 16$ pixels. This compact benchmark is commonly used for evaluating digit classification models. For this dataset, we apply a low-rank PCA approximation with rank 100. Results are shown in the right panel of Figure~\ref{fig:syn-online-pca}. As with the synthetic dataset, GeoClip with low-rank PCA also achieves faster convergence on USPS compared to baseline methods.
 We provide $\varepsilon$-vs-iteration plots for both datasets, illustrating how faster convergence reduces overall privacy cost in Appendix~\ref{app:privacy-curves}.

We compare the accuracy of GeoClip using Algorithm~\ref{alg:geoclip} and Algorithm~\ref{alg:rankkplus1} with $k=50$ on the same synthetic dataset, with the results shown in Figure~\ref{fig:privacy-curves} in Appendix~\ref{app:privacy-curves}. The results demonstrate that even with $k=50$, Algorithm~\ref{alg:rankkplus1} achieves accuracy comparable to Algorithm~\ref{alg:geoclip}, which computes the full covariance.
\begin{figure*}
    \centering
    \begin{minipage}{0.47\textwidth}
        \centering
        \includegraphics[width=0.82\textwidth]{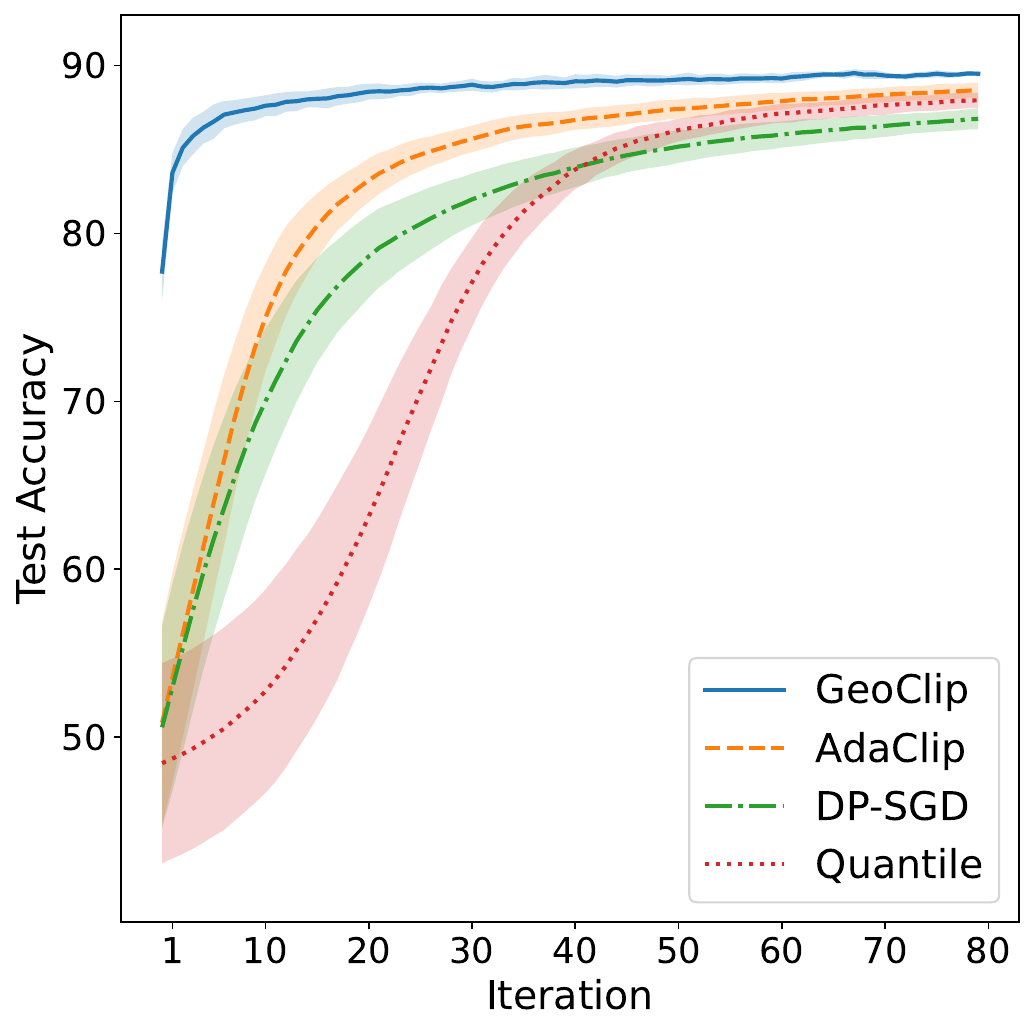}
    \end{minipage}
    \hspace{5pt}
    \begin{minipage}{0.47\textwidth}
        \centering   \includegraphics[width=0.82\textwidth]{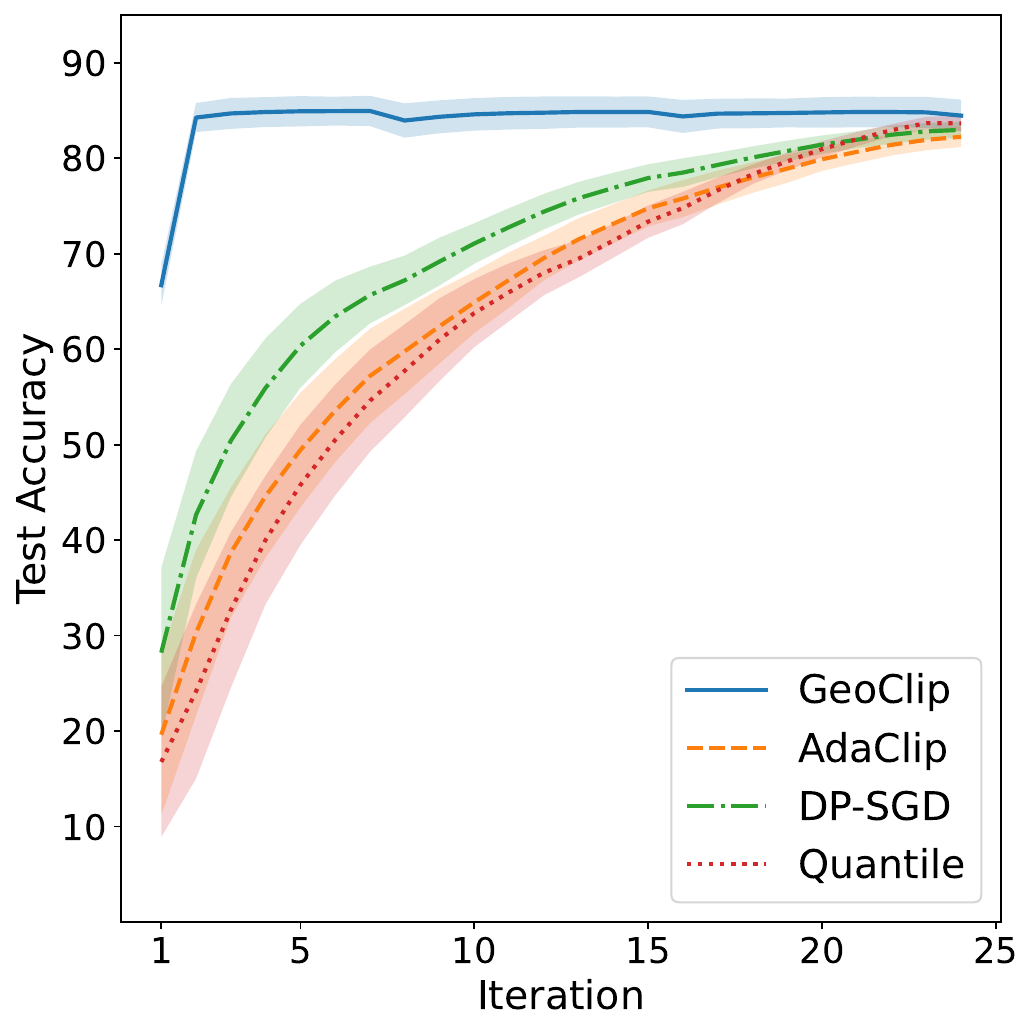}
    \end{minipage}
   \caption{The left panel shows results on the synthetic Gaussian dataset with 400 features using a rank-50 PCA approximation for GeoClip. The plot displays average test accuracy $(\%)$ over 80 iterations with a batch size of 1024. GeoClip achieves the fastest convergence and highest average accuracy. The right panel shows results on the USPS dataset using a rank-100 approximation over 25 iterations with a batch size of 1024, where a similar convergence trend is observed. Shaded regions represent standard deviation across 20 random seeds.}
    \label{fig:syn-online-pca}
\end{figure*}

We also conduct an ablation study on $\gamma$, $h_1$, and $h_2$ using the USPS dataset under the same experimental settings. The results are presented in Appendix~\ref{app:ablation}.
The findings show that varying $\gamma$ results in only minor performance changes. Likewise, $h_1$ has no noticeable impact, and $h_2$ introduces small variations.

\section{Conclusions}
We have introduced GeoClip, a geometry-aware framework for differentially private SGD that leverages the structure of the gradient distribution to improve both utility and convergence. By operating in a basis adapted to the estimated 
noisy gradients, GeoClip injects noise more strategically, thereby reducing distortion without incurring additional privacy cost. We have provided a formal analysis of convergence guarantees which characterizes the optimal transformation. Our empirical results on synthetic and real-world datasets show that GeoClip consistently converges faster and outperforms existing adaptive clipping methods, improving both the mean and standard deviation of the performance metrics over multiple runs. Via  low-rank approximation method, we have shown that GeoClip scales to the high-dimensional data setting, thus making it suitable for practical deployment in large, privacy-sensitive models. 

\paragraph{Limitations.} The linear transformation involved in GeoClip requires an additional computation via an eigendecomposition. Our low-rank approximation addresses that to some extent. The algorithm introduces additional hyperparameters (most notably $h_2$) compared to standard DP-SGD, which must be tuned for optimal performance. Our experiments have been performed on a limited collection of datasets; additional testing is needed to see how our algorithm performs in more generality.

\paragraph{Broader impact.} As discussed in the Introduction, our work is motivated by societal concerns, with a focus on improving the theoretical limits of differentially private optimization.

\section*{Acknowledgements}
We thank the anonymous reviewers for their valuable feedback, which significantly improved the quality of this paper. This work was supported by the National Science Foundation under Grant Nos.\ CIF-1901243, CIF-2312666, and CIF-2007688.


\clearpage
\bibliographystyle{unsrtnat}
\bibliography{Bibliography}
\clearpage
\appendix
\section{Proof of Theorem~\ref{thm:convergence}}\label{app:convergence-proof}
\begin{theorem*}[Convergence of GeoClip]
 Assume $f$ has an $L$-Lipschitz continuous gradient. Further, assume the stochastic gradients are bounded, i.e., $\|\nabla f_k(\theta)\| \leq G$, and have bounded variance, i.e., $\mathbb{E}_k \|\nabla f_k(\theta) - \nabla f(\theta)\|^2 \leq \sigma_g^2$. Let $\theta^* = \arg\min_{\theta \in \mathbb{R}^d} f(\theta)$ denote the optimal solution, and suppose the learning rate satisfies $\eta < \frac{2}{3L}$. Then, for the iterates $\{\theta_t\}_{t=0}^{T-1}$ produced by GeoClip with batch size 1 using the update rule
$
    \theta_{t+1} = \theta_t - \eta \tilde{g}_t,
$
where $\tilde{g}_t$ is defined in~\eqref{g_tilde}, the average squared gradient norm satisfies:

\begin{align}
\nonumber\frac{1}{T} \sum_{t=0}^{T-1} \mathbb{E} \|\nabla f(\theta_t)\|^2 
&\leq 
\underbrace{\frac{f(\theta_0) - f(\theta^*)}{T \left(\eta - \frac{3L\eta^2}{2} \right)}}_{\text{Optimization gap}} 
+ \underbrace{\frac{3L\eta}{2 - 3L\eta} \sigma_g^2}_{\text{Gradient variance term}}+ \underbrace{\frac{L\eta \sigma^2}{T(2 - 3L\eta)} \sum_{t=0}^{T-1} \mathbb{E}\Tr\left[\left(M_t^\top M_t\right)^{-1}\right]}_{\text{Noise-injection term}} \\
&\quad+ \underbrace{\frac{2}{T(2 - 3L\eta)} \sum_{t=0}^{T-1} \mathbb{E}\left[ 
\beta(a_t) \left(\Tr\left(M_t^\top M_t \Sigma_t \right) + \|M_t(\mathbb{E}[g_t \mid \theta^t] - a_t)\|^2 \right)
\right]}_{\text{Clipping error term}},
\end{align}
where $ \theta^t = (\theta_0, \dots, \theta_t) $ represents the history of parameter values up to iteration $t$, $\Sigma_t=\textup{Cov}(g_t | \theta^t)$, and
\begin{align}
 \beta(a_t) = (G + \|a_t\|)\left(G + \frac{3L\eta}{2}(G + \|a_t\|)\right).
\end{align}
\end{theorem*}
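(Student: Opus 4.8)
The plan is to follow the standard descent-lemma approach for nonconvex SGD, adapted to account for the GeoClip transformation. First I would start from the $L$-smoothness of $f$, writing
\begin{align}
f(\theta_{t+1}) \leq f(\theta_t) - \eta \langle \nabla f(\theta_t), \tilde{g}_t \rangle + \frac{L\eta^2}{2}\|\tilde{g}_t\|^2,
\end{align}
and then take conditional expectation given the history $\theta^t$. The key structural step is to decompose the privatized gradient as $\tilde{g}_t = M_t^{-1}\tilde{\omega}_t + a_t$, where $\tilde{\omega}_t = \bar{\omega}_t + N_t$ with $\bar{\omega}_t$ the clipped transformed gradient and $N_t$ independent zero-mean Gaussian noise. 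Substituting this in and expanding, the noise term $N_t$ contributes $\mathbb{E}\|M_t^{-1}N_t\|^2 = \sigma^2\Tr[(M_t^\top M_t)^{-1}]$ to the $\|\tilde{g}_t\|^2$ term, which is where the noise-injection term comes from; the cross term with $N_t$ vanishes in expectation.

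The main obstacle, and the heart of the proof, is controlling the bias introduced by clipping: $\mathbb{E}[\tilde{g}_t \mid \theta^t] \neq \nabla f(\theta_t)$ because $\bar{\omega}_t \neq \omega_t$ whenever $\|\omega_t\| > 1$. I would write the clipping discrepancy as $g_t - (M_t^{-1}\bar{\omega}_t + a_t) = M_t^{-1}(\omega_t - \bar{\omega}_t)$, and bound $\|\omega_t - \bar{\omega}_t\| = \|\omega_t\|\mathbbm{1}\{\|\omega_t\|>1\} \cdot (1 - 1/\|\omega_t\|) \le \|\omega_t\|\mathbbm{1}\{\|\omega_t\|>1\}$ or more simply use that the clipping error is at most $\|\omega_t\|^2$ in a quadratic sense. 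The term $\langle \nabla f(\theta_t), M_t^{-1}(\omega_t - \bar\omega_t)\rangle$ must be bounded; since $g_t$ is unbiased for $\nabla f(\theta_t)$ at batch size $1$ (so $\mathbb{E}[g_t\mid\theta^t] = \nabla f(\theta_t)$), and using the gradient norm bound $\|\nabla f_k\|\le G$, I expect to bound this cross term by something like $(G+\|a_t\|)\mathbb{E}[\|M_t(g_t - a_t)\|\mathbbm{1}\{\|M_t(g_t-a_t)\|>1\}\mid\theta^t]$, then apply $x\mathbbm{1}\{x>1\}\le x^2$ to turn it into a second-moment expression $\mathbb{E}\|M_t(g_t-a_t)\|^2 = \Tr(M_t^\top M_t\Sigma_t) + \|M_t(\mathbb{E}[g_t\mid\theta^t]-a_t)\|^2$. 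The $\frac{L\eta^2}{2}\|\tilde g_t\|^2$ term produces a similar clipping contribution but scaled by $\frac{3L\eta}{2}$ relative to the main $G$ factor (the factor $3$ absorbing the three-way expansion of $\|\tilde g_t\|^2$ into signal, clipping-bias, and noise parts), which is exactly the structure of $\beta(a_t) = (G+\|a_t\|)(G + \frac{3L\eta}{2}(G+\|a_t\|))$.

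Having assembled the per-iteration bound
\begin{align}
\mathbb{E}[f(\theta_{t+1})\mid\theta^t] \leq f(\theta_t) - \left(\eta - \tfrac{3L\eta^2}{2}\right)\|\nabla f(\theta_t)\|^2 + (\text{variance, noise, clipping terms}),
\end{align}
I would take total expectation, telescope over $t = 0,\dots,T-1$, use $f(\theta_T) \geq f(\theta^*)$, and divide by $T(\eta - \frac{3L\eta^2}{2})$ to obtain the stated bound. The learning-rate condition $\eta < \frac{2}{3L}$ is exactly what makes the coefficient $\eta - \frac{3L\eta^2}{2}$ positive so the division is valid, and the $2 - 3L\eta$ in the denominators of the last three terms arises from factoring $\eta$ out of $\eta - \frac{3L\eta^2}{2} = \frac{\eta}{2}(2 - 3L\eta)$. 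The gradient-variance term $\frac{3L\eta}{2-3L\eta}\sigma_g^2$ enters through bounding $\mathbb{E}\|g_t - \nabla f(\theta_t)\|^2 \le \sigma_g^2$ in the expansion of $\|\tilde g_t\|^2$; at batch size $1$ this is the single-sample variance. The bookkeeping of constants — making sure every split of $\|\tilde g_t\|^2$ into at most three pieces yields precisely the factor $3$ in $\frac{3L\eta^2}{2}$ and in $\beta$ — is the most error-prone part, and I would mirror the structure of the AdaClip Theorem 2 proof closely to keep the constants aligned.
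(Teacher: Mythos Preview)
Your proposal is correct and follows essentially the same route as the paper: descent lemma from $L$-smoothness, separate the Gaussian noise via independence, a three-way Jensen split of $\|g_t+\Delta_t\|^2$ (note: not of $\|\tilde g_t\|^2$ --- the noise is already removed at that point, and the three pieces are $g_t-\nabla f(\theta_t)$, $\nabla f(\theta_t)$, and $\Delta_t$), Cauchy--Schwarz plus Markov on the clipping event for the bias term, then telescoping. One small cleanup: the paper works with the clipping error $\Delta_t=\big(1-1/\max(1,\|\omega_t\|)\big)(g_t-a_t)$ directly in the \emph{original} space, so that $\|\Delta_t\|\le(G+\|a_t\|)\,\mathbbm{1}\{\|\omega_t\|>1\}$ and the second-moment step is simply Markov's inequality $\Pr(\|\omega_t\|>1)\le\mathbb{E}\|\omega_t\|^2$ --- this avoids your detour through $\|\omega_t-\bar\omega_t\|$ (which lives in the transformed space and would pick up an unwanted $\|M_t^{-1}\|$ factor if used naively) and makes the provenance of the $(G+\|a_t\|)$ factor transparent.
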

\begin{proof}
We can express the noisy gradient as:
\begin{align}
    \tilde{g}_t &= M_t^{-1} \tilde{\omega}_t + a_t \\
    &= \frac{M_t^{-1} \omega_t}{\max\{1, \|\omega_t\|\}} + M_t^{-1} N_t + a_t \\
    &= \frac{g_t - a_t}{\max\{ \|M_t (g_t - a_t)\|,1\}} + M_t^{-1} N_t + a_t.
\end{align}
Thus, the parameter update takes the form
\begin{align}
    \theta_{t+1} &= \theta_t - \eta \tilde{g}_t \\
    &= \theta_t - \eta \left[ \frac{g_t - a_t}{\max \left\{ \| M_t (g_t - a_t) \|, 1 \right\}} + M_t^{-1} N_t + a_t \right].
\end{align}
We define the following quantities:
\begin{align}
c_t &= \frac{g_t - a_t}{\max \left\{ \| M_t (g_t - a_t) \|, 1 \right\}}, \label{eq:c_t}\\
\Delta_t &= c_t - (g_t - a_t).\label{eq:delta_t}
\end{align} 
$c_t$ is the clipped version of $g_t - a_t$. $\Delta_t$ quantifies the distortion due to clipping, being zero when no clipping occurs and negative when clipping is applied. By the $L$-Lipschitz continuity of the gradient of $f(\theta)$, we have:
\begin{equation}
f(\theta_{t+1}) \leq f(\theta_t) + \langle \nabla f(\theta_t), \theta_{t+1} - \theta_t \rangle + \frac{L}{2} \|\theta_{t+1} - \theta_t\|^2.
\end{equation}
Let $\theta^t = (\theta_0, \ldots, \theta_t)$ describe the entire history of states. Taking the expectation conditioned on $\theta^t$, we obtain:
\begin{align}
    &\nonumber \mathbb{E} \left[f\left(\theta_{t+1}\right) \middle| \theta^t \right] \\
    &\leq f\left(\theta_{t}\right) + \mathbb{E}\left[\left\langle \nabla f\left(\theta_{t}\right), \theta_{t+1} - \theta_{t}\right\rangle \middle| \theta^t \right] 
    + \frac{L}{2} \mathbb{E}\left[\left\|\theta_{t+1} - \theta_{t}\right\|^{2} \middle| \theta^t \right] \\
    &= f\left(\theta_{t}\right) - \eta \mathbb{E}\left[\left\langle \nabla f\left(\theta_{t}\right), c_{t} + M^{-1}_{t} N_{t} + a_{t} \right\rangle \middle| \theta^t \right] 
    + \frac{L \eta^2}{2} \mathbb{E}\left[\left\|c_{t} + M^{-1}_{t} N_{t} + a_{t}\right\|^2 \middle| \theta^t \right] \\
    &= f\left(\theta_{t}\right) - \eta \mathbb{E}\left[\left\langle \nabla f\left(\theta_{t}\right), c_{t} + a_{t} \right\rangle \middle| \theta^t \right] 
    + \frac{L \eta^2}{2} \mathbb{E}\left[\left\|c_{t} + a_{t}\right\|^2 \middle| \theta^t \right] 
    + \frac{L \eta^2}{2} \mathbb{E}\left[\left\|M^{-1}_{t} N_{t}\right\|^2 \middle| \theta^t \right] \label{eq:zero-mean-ind} \\
    &= f\left(\theta_{t}\right) - \eta \mathbb{E}\left[\left\langle \nabla f\left(\theta_{t}\right), c_{t} + a_{t} \right\rangle \middle| \theta^t \right] 
    + \frac{L \eta^2}{2} \mathbb{E}\left[\left\|c_{t} + a_{t}\right\|^2 \middle| \theta^t \right] 
    + \frac{L \eta^2 \sigma^2}{2} \left\|M^{-1}_{t}\right\|_F^2 \label{eq:cov} \\
    &= f\left(\theta_{t}\right) - \eta \mathbb{E}\left[\left\langle \nabla f\left(\theta_{t}\right), g_{t} + \Delta_{t} \right\rangle \middle| \theta^t \right] 
    + \frac{L \eta^2}{2} \mathbb{E}\left[\left\|g_{t} + \Delta_{t}\right\|^2 \middle| \theta^t \right] 
    + \frac{L \eta^2 \sigma^2}{2} \left\|M^{-1}_{t}\right\|_F^2 \label{eq:sub} \\\nonumber
    &= f\left(\theta_{t}\right) - \eta \left\|\nabla f\left(\theta_{t}\right)\right\|^2 
    - \eta \mathbb{E}\left[\left\langle \nabla f\left(\theta_{t}\right), \Delta_{t} \right\rangle \middle| \theta^t \right] \\
    &\quad + \frac{L \eta^2}{2} \mathbb{E}\left[\left\|g_{t} - \nabla f\left(\theta_{t}\right) + \nabla f\left(\theta_{t}\right) + \Delta_{t}\right\|^2 \middle| \theta^t \right] 
    + \frac{L \eta^2 \sigma^2}{2} \left\|M^{-1}_{t}\right\|_F^2 \label{eq:last}
\end{align}

where \eqref{eq:zero-mean-ind} follows from $\mathbb{E}[N_t] = 0$ and the independence of $N_t$ from $g_t$ (and thus from $c_t + a_t$).  
The equality \eqref{eq:cov} follows because $\mathbb{E}[N N^\top] = \sigma^2 I_{d}$, and $\|M_t^{-1}\|_F$ represents the Frobenius norm of $M^{-1}_t$. The equality \eqref{eq:sub} follows from $c_t + a_t = \Delta_t + g_t$.
The last equality follows because 
\begin{align}
    \mathbb{E}\left[\left\langle\nabla f\left(\theta_{t}\right), g_{t}+\Delta_{t}\right\rangle \middle|\theta^t\right] 
    &= \mathbb{E}\left[\left\langle\nabla f\left(\theta_{t}\right), \Delta_{t}\right\rangle \middle|\theta^t\right] 
    + \mathbb{E}\left[\left\langle\nabla f\left(\theta_{t}\right), g_{t}\right\rangle \middle|\theta^t\right] \\
    &= \mathbb{E}\left[\left\langle\nabla f\left(\theta_{t}\right), \Delta_{t}\right\rangle|\theta^t\right] 
    + \nabla f\left(\theta_{t}\right)^\top \mathbb{E}[g_t | \theta^t] \\
    &= \mathbb{E}\left[\left\langle\nabla f\left(\theta_{t}\right), \Delta_{t}\right\rangle \middle|\theta^t\right] 
    + \| \nabla f\left(\theta_{t}\right)\|^2, \label{eq:unbiased-es}
\end{align}
where \eqref{eq:unbiased-es} follows because $g_t$ is an unbiased estimator of the true gradient, i.e., $\mathbb{E}[g_t \mid \theta^t] = \nabla f(\theta_t)$.
From Jensen's inequality, we have:
\begin{align}
   &\nonumber \mathbb{E}\left[\left\|g_{t} - \nabla f(\theta_{t}) + \nabla f(\theta_{t}) + \Delta_t\right\|^2 \middle| \theta^t\right] \\
   &\quad \leq 3 \left( \mathbb{E}\left[\left\|g_{t} - \nabla f(\theta_{t})\right\|^2 \middle| \theta^t\right] + \|\nabla f(\theta_{t})\|^2 + \mathbb{E}\left[\left\|\Delta_t\right\|^2 \middle| \theta^t\right] \right). \label{ineq:jensen}
\end{align}
From the Cauchy–Schwarz inequality, we have:
\begin{align}
    \mathbb{E}\left[\left\langle \nabla f(\theta_{t}), \Delta_t \right\rangle\middle|\theta^t\right] 
    \leq \|\nabla f(\theta_{t})\| \; \mathbb{E}\left[\left\|\Delta_t\right\| \middle| \theta^t \right].
\label{ineq:cauchy}
\end{align}
Plugging \eqref{ineq:jensen} and \eqref{ineq:cauchy} into \eqref{eq:last}, we obtain:
\begin{align}
\nonumber&\mathbb{E} \left[f\left(\theta_{t+1}\right) \middle| \theta^t \right]\leq  f\left(\theta_{t}\right)-\eta\left\|\nabla f\left(\theta_{t}\right)\right\|^{2}+\eta\left\|\nabla f\left(\theta_{t}\right)\right\| \mathbb{E}\left[\left\|\Delta_t\right\| \middle| \theta^t \right]\\&\quad+\frac{3 L \eta^{2}}{2}\left[\mathbb{E}\left[\left\|g_{t} - \nabla f(\theta_{t})\right\|^2 \middle| \theta^t\right] + \|\nabla f(\theta_{t})\|^2 + \mathbb{E}\left[\left\|\Delta_t\right\|^2 \middle| \theta^t\right]\right]  +\frac{L \eta^{2} \sigma^{2}}{2}\left\|M_{t}^{-1}\right\|_F^{2}.\label{eq:bound1}
\end{align}
To bound $\mathbb{E}\left[\left\|\Delta_t\right\| \middle| \theta^t \right]$ and $\mathbb{E}\left[\left\|\Delta_t\right\|^2 \middle| \theta^t \right]$, we first bound $\Pr\left(\|\Delta_t\| > 0\middle| \theta^t\right)$ and $\|\Delta_t\|$ given $\theta^t$ and $\|\Delta_t\| > 0$. Using Markov's inequality, we obtain:
\begin{align}
    \Pr\left(\|\Delta_t\| > 0 \middle| \theta^t \right) 
    &= \Pr\left(\|M_t (g_t - a_t)\|^2 > 1 \middle| \theta^t \right) \\
    &\leq \Pr\left(\|M_t (g_t - a_t)\|^2 \geq 1 \middle| \theta^t \right)  \\
    &\leq \mathbb{E}\left[\|M_t (g_t - a_t)\|^2 \middle| \theta^t \right] \\
    &= \mathbb{E}\bigg[\|M_t \left(g_t - \mathbb{E}[g_t \middle| \theta^t] + \mathbb{E}[g_t \middle| \theta^t] - a_t \right)\|^2 \Big| \theta^t \bigg] \\
    &= \mathbb{E}\bigg[\|M_t \left(g_t - \mathbb{E}[g_t \middle| \theta^t]\right)\|^2 \Big| \theta^t \bigg] + \|M_t(\mathbb{E}[g_t | \theta^t] - a_t)\|^2 \\
    &= \Tr\left(\text{Cov}\left(M_t g_t \middle| \theta^t \right)\right) + \|M_t(\mathbb{E}[g_t | \theta^t] - a_t)\|^2.
\end{align}
Using \eqref{eq:c_t} and \eqref{eq:delta_t}, we obtain:
\begin{align}
    \|\Delta_t\| &= \left\| \left(\frac{1}{\max \left\{ \| M_t (g_t - a_t) \|, 1 \right\}} - 1 \right) (g_t - a_t) \right\| \\
    &\leq \|g_t - a_t\|  \\
    &\leq \|g_t\| + \|a_t\|  \\
    &\leq G+\|a_t\|,
\end{align}
where the final inequality follows from $\|\nabla f_k(\theta)\| \leq G$.
Therefore,
\begin{align}
    \mathbb{E}\left[\left\|\Delta_t\right\| \middle| \theta^t \right]&= \Pr\left(\|\Delta_t\| > 0 \middle| \theta^t \right) \mathbb{E}\left[\left\|\Delta_{t}\right\| \middle|\left\|\Delta_{t}\right\|>0,\theta^t\right]\\&
    \leq \left(G+\|a_t\|\right) \bigg(\Tr\left(\text{Cov}\left(M_t g_t \middle| \theta^t \right)\right) + \|M_t(\mathbb{E}[g_t | \theta^t] - a_t)\|^2\bigg).\label{eq:e-delta}
\end{align}
Similarly, we obtain the following bound:
\begin{align}
    \mathbb{E}\left[\left\|\Delta_t\right\|^2 \middle| \theta^t \right] \leq \left(G+\|a\|\right)^2\bigg(\Tr\left(\text{Cov}\left(M_t g_t \middle| \theta^t \right)\right) + \|M_t(\mathbb{E}[g_t | \theta^t] - a_t)\|^2\bigg).\label{eq:e-delta-2}
\end{align}
By plugging \eqref{eq:e-delta} and \eqref{eq:e-delta-2} into \eqref{eq:bound1} and rearranging the terms, we obtain  
\begin{align}
    &\nonumber\mathbb{E} \left[f\left(\theta_{t+1}\right) \middle| \theta^t \right]  
    \\\nonumber&\leq f\left(\theta_{t}\right)+ (\frac{3L\eta^2}{2}- \eta) \|\nabla f(\theta_t)\|^2 
    + \eta (G+\|a_t\|)\left(G+\frac{3L\eta}{2}(G+\|a_t\|)\right)
    \\&\bigg[\Tr\left(\text{Cov}\left(M_t g_t \middle| \theta^t \right)\right)+\|M_t(\mathbb{E}[g_t | \theta^t] - a_t)\|^2\bigg]+ \frac{3 L \eta^2 }{2}\mathbb{E}\left[\left\|g_{t} - \nabla f(\theta_{t})\right\|^2 \middle| \theta^t\right]+ \frac{L \eta^2 \sigma^2}{2} \|M_t^{-1}\|_F^2.
\end{align}
Above, we have used the fact that $\|\nabla f(\theta_t)\| \leq G$, which itself follows from Jensen's inequality and the assumption that  $\|\nabla f_k(\theta)\| \leq G$. Let
\begin{align}
\beta(a_t) =(G+\|a_t\|)\left(G + \frac{3L\eta}{2}(G+\|a_t\|)\right). 
\end{align}
Rearranging the terms, applying the law of total expectation (now taking expectation over the entire history of states $ \theta^t $), and using the bound $ \mathbb{E}_{k} \|\nabla f_{k}(\theta) - \nabla f(\theta)\|^2 \leq \sigma_g^2$, we obtain:
\begin{align}
   \nonumber \left(\eta-\frac{3L\eta^2}{2}  \right)\mathbb{E}\|\nabla f(\theta_t)\|^2 
    &\leq \mathbb{E}f(\theta_t)- \mathbb{E}f(\theta_{t+1})+ \frac{3 L \eta^2 \sigma_g^2}{2} 
    + \frac{L \eta^2 \sigma^2}{2} \mathbb{E}\|M_t^{-1}\|_F^2 
     \\
    &\quad + \eta \ \mathbb{E}\left[ \beta(a_t) \left(\Tr\left(\text{Cov}\left(M_t g_t \middle| \theta^t \right)\right)+\|M_t(\mathbb{E}[g_t | \theta^t] - a_t)\|^2\right)\right].
\end{align}
Summing over $t = 0$ to $T-1$ and applying the telescoping sum, we get:
\begin{align}
   \nonumber \left(\eta-\frac{3L\eta^2}{2}  \right) \sum_{t=0}^{T-1} \mathbb{E} \|\nabla f(\theta_t)\|^2  
    &\leq 
    f(\theta_0) - \mathbb{E} f(\theta_T)+ \frac{3 L \eta^2 \sigma_g^2}{2} T + \frac{L \eta^2 \sigma^2}{2} \sum_{t=0}^{T-1} \mathbb{E}\|M_t^{-1}\|_F^2 
 \\&\hspace{-4mm}+ \eta \sum_{t=0}^{T-1} \ \mathbb{E}\left[ \beta(a_t) \left(\Tr\left(\text{Cov}\left(M_t g_t \middle| \theta^t \right)\right)+\|M_t(\mathbb{E}[g_t | \theta^t] - a_t)\|^2\right)\right]
\end{align}
Dividing by $T$, assuming $ \eta \leq \frac{2}{3L} $, applying the identity $ \|M_t^{-1}\|_F^2 = \Tr\left(\left(M_t^\top M_t\right)^{-1}\right)$, and using the fact that $ \mathbb{E}f(\theta_T) \geq f(\theta^*) $, we obtain:
\begin{align}
   &\nonumber  \frac{1}{T} \sum_{t=0}^{T-1} \mathbb{E} \|\nabla f(\theta_t)\|^2  
    \\\nonumber&\leq 
    \frac{f(\theta_0) - f(\theta^*)}{T \left(\eta - \frac{3L\eta^2}{2} \right)}+ \frac{3 L \eta}{2 - 3L\eta} \sigma_g^2+ \frac{L \eta \sigma^2}{T (2 - 3L\eta)} \sum_{t=0}^{T-1} \mathbb{E}\left[\Tr\left(M_t^\top M_t\right)^{-1}\right] 
    \\
    & + \frac{2}{T(2-3L\eta)}\sum_{t=0}^{T-1} \ \mathbb{E}\left[ \beta(a_t) \left(\Tr\left(M_t^\top M_t \textup{Cov}(g_t | \theta^t)\right)+\|M_t(\mathbb{E}[g_t | \theta^t] - a_t)\|^2\right)\right]. 
\end{align}
\end{proof}
\section{Proof of Theorem~\ref{thm:optimal-solution-gen}}\label{app:optimal-solution-gen}
We would like to solve the following optimization
\begin{align}
\nonumber\underset{{M_t}}{\text{minimize}} \quad & \Tr\left(M_t^\top M_t\right)^{-1} \\
\text{subject to} \quad & \Tr\left(M_t^\top M_t \, \Sigma_t\right) \leq \gamma.
\end{align}
Defining $A_t$ as the Gram matrix of $M_t$, i.e., $A_t = M_t^\top M_t$, we can reformulate the optimization problem as:  
\begin{align}
\nonumber\underset{{A_t}}{\text{minimize}}\quad& \Tr(A_t^{-1})\\ \text{subject to} \quad & \Tr(A_t \ \Sigma_t) \leq \gamma. \label{opt:reformulated-gen}
\end{align}  
Both the objective and the constraint are convex in $A$, so to solve the problem, 
we introduce the Lagrangian function:
\begin{align}
\ell(A_t, \mu) = \Tr(A_t^{-1}) + \mu \left( \Tr(A_t \Sigma_t) - \gamma \right),
\end{align}
Taking the derivative of $\ell(A_t, \mu)$ with respect to $A_t$ and setting the derivative to zero for optimality, we get
\begin{align}
A_t^{-2} = \mu \ \Sigma_t.
\end{align}
and so
\begin{align}
A_t = \frac{1}{\sqrt{\mu}} \Sigma_t^{-\frac{1}{2}}.
\end{align}
Substituting $A_t$ in the constraint $\Tr\left(A_t \, \Sigma_t\right)=\gamma$, we get
\begin{align}
\frac{1}{\sqrt{\mu}} \Tr\left( \Sigma_t^{\frac{1}{2}} \right) = \gamma.
\end{align}
Solving for $\mu$:
\begin{align}
\sqrt{\mu} = \frac{1}{\gamma} \Tr\left( \Sigma_t^{\frac{1}{2}} \right).
\end{align}
Thus, the optimal $A_t$ is:
\begin{align}
A_t = \frac{\gamma}{\Tr\left( \Sigma_t^{\frac{1}{2}} \right)} \ \Sigma_t^{-\frac{1}{2}}.
\end{align}
Using the eigen decomposition, we write $\Sigma_t$ as:
\begin{align}
\Sigma_t = U_t \Lambda_t U_t^\top,
\end{align}
where $U_t$ is an orthogonal matrix whose columns are the eigenvectors of $\Sigma_t$, and $$\Lambda_t = \mathrm{diag}(\lambda_1, \ldots, \lambda_d)$$ is a diagonal matrix containing the corresponding eigenvalues.
We now have
\begin{align}
\Sigma_t^{\frac{1}{2}} = U_t \Lambda_t^{\frac{1}{2}} U_t^\top, \quad
\Sigma_t^{-\frac{1}{2}} = U_t \Lambda_t^{-\frac{1}{2}} U_t^\top,
\end{align}
Thus, the final expression for $A_t$ is:
\begin{align}
A_t = \frac{\gamma}{\Tr(\Lambda_t^{\frac{1}{2}})} U_t \Lambda_t^{-\frac{1}{2}} U_t^\top.
\end{align}
Therefore, we have 
\begin{align}
M_t^\top M_t = \frac{\gamma}{\Tr(\Lambda_t^{\frac{1}{2}})} U_t \Lambda_t^{-\frac{1}{2}} U_t^\top,
\end{align}
and
\begin{align}
    M_t=\left(\frac{\gamma}{\Tr(\Lambda_t^{\frac{1}{2}})}\right)^{1/2} \Lambda_t^{-\frac{1}{4}} U_t^\top.
\end{align}
Since $\Tr(\Lambda_t^{1/2}) = \sum_{i=1}^d \sqrt{\lambda_i}$, we can simplify $M_t$ as follows:
\begin{align}
M_t &= \left(\frac{\gamma}{\sum_{i=1}^d \sqrt{\lambda_i}} \right)^{1/2} \Lambda_t^{-1/4} U_t^\top.
\end{align}
\section{GeoClip with Low-Rank PCA}\label{app:low-rank-pca}
Computing and storing the full gradient covariance matrix becomes infeasible in high dimensions. To address this, we propose a low-rank approximation method in Algorithm~\ref{alg:geoclip-pca-full}, which incorporates a rank-$k$ PCA step described in the \hyperlink{alg:rankkplus1-app}{\textsc{Streaming Rank-$k$ PCA}} algorithm. 



To compute $M_0$ and $M_0^{\text{inv}}$ in line~\ref{alstep:trace} of Algorithm~\ref{alg:geoclip-pca-full}, we use the simplification $\Tr(\Lambda_0^{1/2}) = \sum_{i=1}^k 1 = k$. Also, in line~\ref{alstep:clamp}, each $\lambda_i$ depends on $t$, but we omit the subscript for notational simplicity.
 
\begin{algorithm}[t]
\caption{\textsc{GeoClip with Rank-$k$ PCA}  }
\label{alg:geoclip-pca-full}
\begin{spacing}{1.2}
\begin{algorithmic}[1]
\Require Dataset $\mathcal{D}$, model $f_\theta$, loss $\mathcal{L}$, learning rate $\eta$, noise scale $\sigma$, steps $T$, rank $k$, hyperparameters $h_1, h_2$, $\beta_1$, $\beta_3$ 
\State Initialize $\theta$, mean vector $a_0 = 0$, $U_0 = [e_1, \dots, e_k]$ where $e_i\in\mathbb{R}^d$ is the $i$-th standard basis vector, $\Lambda_0 = I_k$\label{alstep:basis}
\State Compute transform $M_0 \gets \left(\gamma / k \right)^{1/2} \Lambda_0^{-1/4} U_0^\top$ and $M_0^{\text{inv}} \gets \left(\gamma / k \right)^{-1/2} U_0 \Lambda_0^{1/4}$\label{alstep:trace}
\For{$t = 0$ to $T$}
    \State Sample a data point $(x_t, y_t)$
    \State Compute gradient $g_t \gets \nabla_\theta \mathcal{L}(f_\theta(x_t), y_t)$
    \State Center and transform: 
 $\omega_t \gets M_t (g_t - a_t)$
    \State Clip: $\bar{\omega}_t \gets \omega_t / \max(1, \|\omega_t\|_2)$
    \State Add noise: $\tilde{\omega}_t \gets \bar{\omega}_t + N$, where $N\sim\mathcal{N}(0, \sigma^2 I_k)$
    \State Map back: $\tilde{g}_t \gets M_t^{\text{inv}} \tilde{\omega}_t + a_t$
    \State  Update model: $\theta_{t+1} \gets \theta_t - \eta \tilde{g}_t$
    \State Update mean: $a_{t+1} \gets \beta_1 a_t + (1 - \beta_1) \tilde{g}_t$
    \State Update eigenspace: $(U_{t+1}, \Lambda_{t+1}) \gets ~\hyperlink{alg:rankkplus1-app}{\textsc{Streaming Rank-$k$ PCA}}(U_t, \Lambda_t, \tilde{g}_t, a_{t+1}, \beta_3, k)$
    \State Clamp eigenvalues: $\lambda_i \gets \texttt{Clamp}(\lambda_i, \text{min}=h_1, \text{max}=h_2)$\label{alstep:clamp}
    \State Set $M_{t+1} \gets \left(\gamma / \sum_i \sqrt{\lambda_i} \right)^{1/2} \Lambda_{t+1}^{-1/4} U_{t+1}^\top$
    \State Set $M^{\text{inv}}_{t+1} \gets \left(\gamma / \sum_i \sqrt{\lambda_i} \right)^{-1/2} U_{t+1} \Lambda_{t+1}^{1/4}$
\EndFor
\State \Return Final parameters $\theta$
\end{algorithmic}
\end{spacing}
\end{algorithm}

\begin{algorithm}[ht]
\caption*{\hypertarget{alg:rankkplus1-app}{\textsc{Streaming Rank-$k$ PCA}}}
\begin{spacing}{1.2}
\begin{algorithmic}[1]
\Require Eigenvectors $U_t \in \mathbb{R}^{d \times k}$, eigenvalues $\Lambda_t \in \mathbb{R}^{k\times k}$, gradient $\tilde{g}_t\in\mathbb{R}^d$, mean $a_{t+1}\in\mathbb{R}^d$, factor $\beta_3\in\mathbb{R}$, rank $k$
\State Center: $z\gets \tilde{g}_t - a_{t+1}$
\State Form augmented matrix: $U_{\text{aug}} \gets [U_t \ \ z]$
\State Compute: $Z \gets U_{\text{aug}} \; \mathrm{diag}(\sqrt{\beta_3 \lambda_1}, \dots, \sqrt{\beta_3 \lambda_k}, \sqrt{1 - \beta_3})$
\State Perform SVD: $Z = V S R^\top$
\State Set $U_{t+1} \gets$ first $k$ columns of $V$
\State Set $\Lambda_{t+1} \gets$ squares of the first $k$ singular values in $S$
\State Return: $U_{t+1}, \Lambda_{t+1}$
\end{algorithmic}
\end{spacing}
\end{algorithm}

\section{Additional Plots Related to Section~\ref{subsec:low-rank} }\label{app:privacy-curves}
The middle and left panels of Figure~\ref{fig:privacy-curves} show the privacy cost ($\varepsilon$) versus iteration curves corresponding to the accuracy–iteration plots in Figure~\ref{fig:syn-online-pca} (Section~\ref{subsec:low-rank}, Low-Rank PCA Results). These plots highlight how faster convergence reduces overall privacy cost. For example, on the USPS dataset (Figure~\ref{fig:syn-online-pca}), GeoClip achieves high accuracy within the first few iterations, requiring only $\varepsilon \approx 0.15$, whereas quantile-based clipping takes about 24 iterations to reach similar accuracy, incurring a higher privacy cost of $\varepsilon \approx 0.5$.

We compare the accuracy of GeoClip using Algorithm~\ref{alg:geoclip} and Algorithm~\ref{alg:rankkplus1} with $k=50$ on the same synthetic dataset described in Section~\ref{subsec:low-rank}, as shown in the right panel of Figure~\ref{fig:privacy-curves}. The results show that even with $k=50$, Algorithm~\ref{alg:rankkplus1} achieves accuracy comparable to Algorithm~\ref{alg:geoclip}, which computes the full covariance.
\begin{figure*}[t]
    \centering
    \begin{minipage}{0.31\textwidth}
        \centering
        \includegraphics[width=\textwidth]{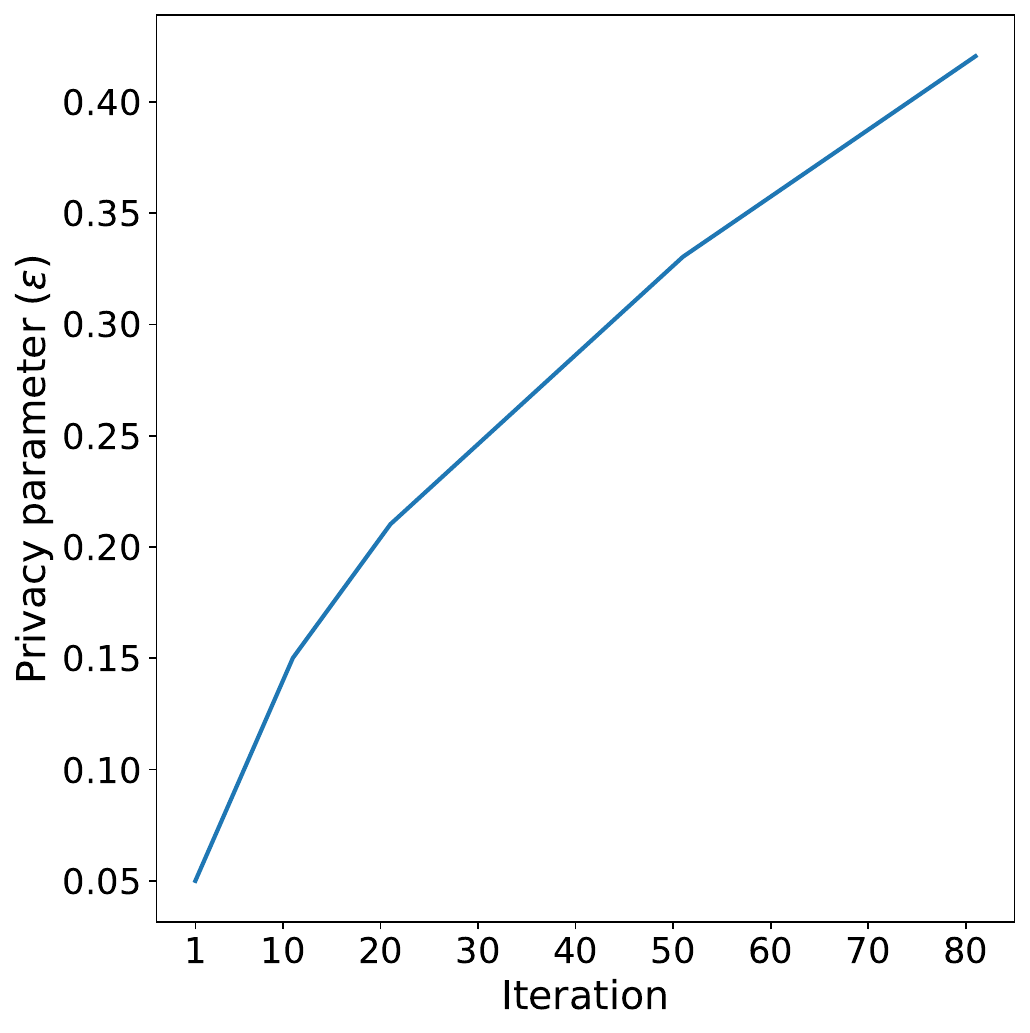}
    \end{minipage}
    \hfill
    \begin{minipage}{0.31\textwidth}
        \centering  
        \includegraphics[width=\textwidth]{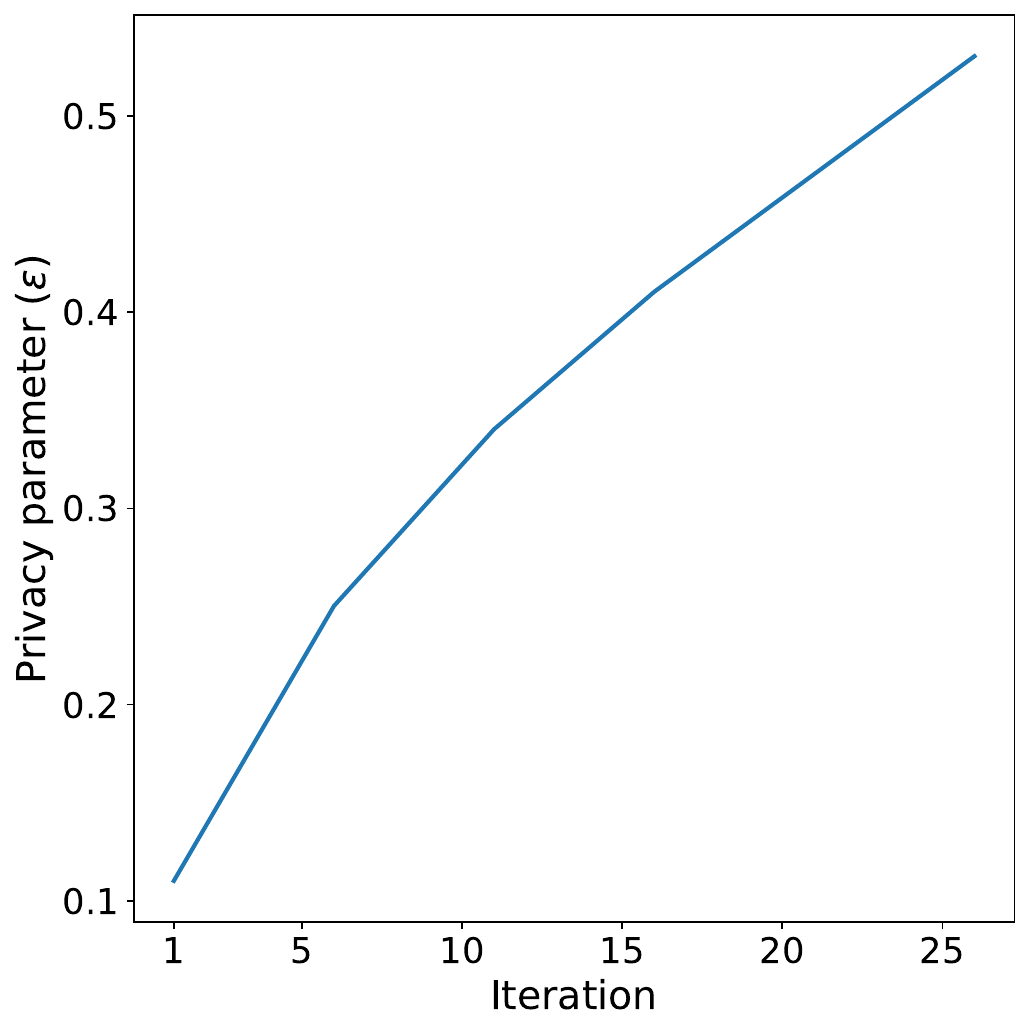}
    \end{minipage}
    \hfill
    \begin{minipage}{0.31\textwidth}
        \centering  
        \includegraphics[width=\textwidth]{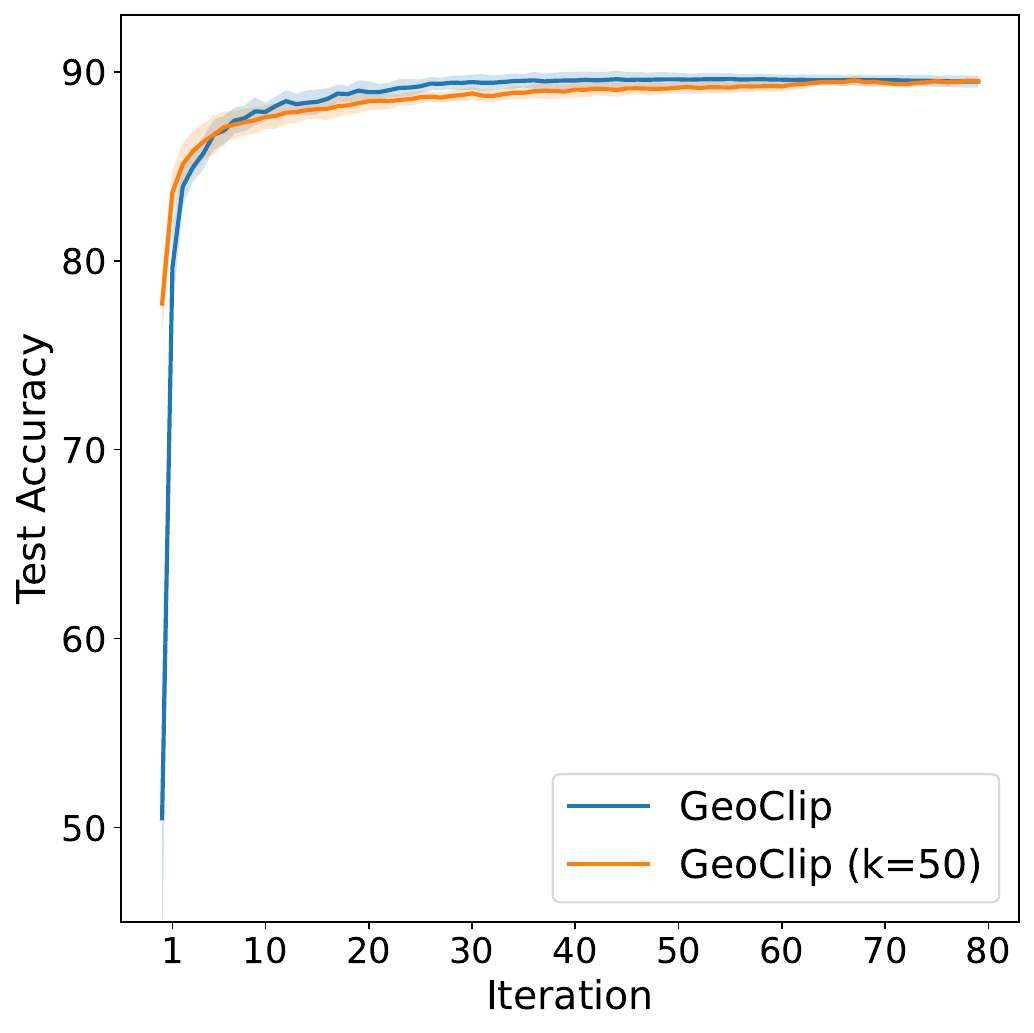}
    \end{minipage}
      \caption{The left plot shows the overall privacy budget $\varepsilon$ spent on training the synthetic Gaussian dataset with 400 features for $\delta = 10^{-5}$, while the middle plot shows the same for the USPS dataset with $\delta = 10^{-5}$. These plots apply to all four algorithms, which are tuned to achieve the same privacy level for a given number of iterations. The right panel compares GeoClip’s performance using the full-covariance and low-rank approximation algorithms.}
    \label{fig:privacy-curves}
\end{figure*}



\section{Ablation Study}\label{app:ablation}
We conduct an ablation study on $\gamma$, $h_1$, and $h_2$ using the USPS dataset, under the same settings as in Section~\ref{subsec:low-rank}.
As shown, varying $\gamma$ does not significantly affect performance. The results indicate no noticeable impact from $h_1$, and only minor variations with $h_2$.

\begin{table}[h]
\centering
\caption{Test accuracy (\%) for different values of $\gamma$ on the USPS dataset with $h_2 = 10$ and $h_1 = 10^{-15}$}
\begin{tabular}{lccc}
\toprule
\textbf{Method} & $\gamma = 0.2$ & $\gamma = 0.6$ & $\gamma = 1.0$ \\
\midrule
GeoClip ($k = 50$) & 85.537 & 86.182 & 85.752 \\
\bottomrule
\end{tabular}
\end{table}
\begin{table}[h!]
\centering
\caption{Test accuracy (\%) for different values of $h_2$ on the USPS dataset with $h_1=10^{-15}$ and $\gamma=1$}
\begin{tabular}{lcccc}
\toprule
\textbf{Method} & $h_2 = 1$ & $h_2 = 10$ & $h_2 = 100$ & $h_2 = 1000$ \\
\midrule
GeoClip ($k = 50$) & 86.021 & 85.752 & 85.752 & 85.752 \\
\bottomrule
\end{tabular}
\end{table}
\begin{table}[h!]
\centering
\caption{Test accuracy (\%) for different values of $h_1$ on the USPS dataset with $h_2 = 10$ and $\gamma = 1$}
\begin{tabular}{lccc}
\toprule
\textbf{Method} & $h_1 = 10^{-6}$ & $h_1 = 10^{-10}$ & $h_1 = 10^{-15}$ \\
\midrule
GeoClip ($k = 50$) & 85.752 & 85.752 & 85.752 \\
\bottomrule
\end{tabular}
\end{table}
\end{document}